\documentclass{article}

\usepackage{arxiv}

\usepackage[utf8]{inputenc} 
\usepackage[T1]{fontenc}    
\usepackage{hyperref}       
\usepackage{url}            
\usepackage{booktabs}       
\usepackage{amsfonts}       
\usepackage{nicefrac}       
\usepackage{microtype}      
\usepackage{lipsum}
\usepackage{graphicx}
\graphicspath{ {./images/} }
\usepackage{microtype}
\usepackage{graphicx}
\usepackage{subcaption}
\usepackage{booktabs} 
\usepackage{longtable}

\usepackage{amssymb}
\usepackage{mathtools}
\usepackage{amsthm}
\usepackage{enumitem}

\usepackage[utf8]{inputenc}
\usepackage{multirow}
\usepackage{xcolor}
\usepackage{amssymb} 
\usepackage{graphicx} 

\usepackage{amsmath,amssymb,amsthm,mathtools}
\usepackage{geometry}
\usepackage[utf8]{inputenc}
\usepackage{listings}
\usepackage{xcolor}

\newtheorem{theorem}{Theorem}

\DeclareMathOperator{\JS}{JS}
\DeclareMathOperator{\KL}{KL}
\DeclareMathOperator{\Supp}{Supp}

\title{Turing Test on Screen: 
A Benchmark for Mobile GUI Agent Humanization }

\author{
 Jiachen Zhu \\
  Shanghai Jiao Tong University\\
  Shanghai, China \\
  \texttt{gebro13@sjtu.edu.cn} \\
   \And
 Lingyu Yang \\
  Shanghai Jiao Tong University\\
  Shanghai, China \\
  \texttt{jlnhbyu.yang@sjtu.edu.cn} \\
  \And
 Rong Shan \\
  Shanghai Jiao Tong University\\
  Shanghai, China \\
  \texttt{shanrong@sjtu.edu.cn} \\
  \And
  Congmin Zheng \\
  Shanghai Jiao Tong University\\
  Shanghai, China \\
  \texttt{desp.zcm@sjtu.edu.cn} \\
  \And
  Zeyu Zheng \\
  Carnegie Mellon University\\
  Pittsburgh, Pennsylvania, USA \\
  \texttt{zeyuzhen@andrew.cmu.edu} \\
  \And
  Weiwen Liu \\
  Shanghai Jiao Tong University\\
  Shanghai, China \\
  \texttt{wwliu@sjtu.edu.cn} \\
  \And
  Yong Yu \\
  Shanghai Jiao Tong University\\
  Shanghai, China \\
  \texttt{yyu@sjtu.edu.cn} \\
  \And
  Weinan Zhang \\
  Shanghai Jiao Tong University\\
  Shanghai, China \\
  \texttt{wnzhang@sjtu.edu.cn} \\
  \And
  Jianghao Lin \\
  Shanghai Jiao Tong University\\
  Shanghai, China \\
  \texttt{linjianghao@sjtu.edu.cn} \\
}

\begin{document}
\maketitle
\begin{abstract}

The rise of autonomous GUI agents has triggered adversarial countermeasures from digital platforms, yet existing research prioritizes utility and robustness over the critical dimension of anti-detection. We argue that for agents to survive in human-centric ecosystems, they must evolve Humanization capabilities. We introduce the ``Turing Test on Screen,'' formally modeling the interaction as a MinMax optimization problem between a detector and an agent aiming to minimize behavioral divergence. We then collect a new high-fidelity dataset of mobile touch dynamics, and conduct our analysis that vanilla LMM-based agents are easily detectable due to unnatural kinematics. Consequently, we establish the Agent Humanization Benchmark (AHB) and detection metrics to quantify the trade-off between imitability and utility. Finally, we propose methods ranging from heuristic noise to data-driven behavioral matching, demonstrating that agents can achieve high imitability theoretically and empirically without sacrificing performance. This work shifts the paradigm from \textit{whether} an agent can perform a task to \textit{how} it performs it within a human-centric ecosystem, laying the groundwork for seamless coexistence in adversarial digital environments.

\end{abstract}


\section{Introduction}

The advent of Large Multimodal Models (LMMs)~\cite{gpt4, gemini, llava} has fundamentally reshaped the landscape of human-mobile interaction. By empowering systems to perceive visual interfaces and execute complex interactions, we are witnessing a paradigm shift from static scripts to autonomous Graphical User Interface (GUI) Agents~\cite{appagent, mobile_agent, cogagent}. These agents possess the capability to navigate mobile applications, process visual information, and execute tasks on behalf of users, promising a future where digital labor is significantly offloaded to AI~\cite{mind2web, yao2022webshop}.

However, the widespread deployment of GUI Agents precipitates a conflict of interest between users and service providers, potentially triggering an adversarial dynamic between autonomous agents and digital platforms~\cite{lin2025superplatformsattackaiagents,allouah2025aiagentbuyingevaluation}. As is shown in Figure~\ref{fig:intro}, modern digital ecosystems rely heavily on the \textit{attention economy}, where user engagement and advertisement impressions are the primary revenue drivers~\cite{lin2024recommendersystemsbenefitlarge}. In contrast, GUI Agents are usually optimized for efficiency and targeted for goals, bypassing promotional content and streamlining interaction paths. This behavior poses an existential threat to the business models of incumbent platforms.

This adversarial interest compels platforms to deploy Platform Defenses. These defenses may range from service blocking to more sophisticated adversarial interventions, such as injecting targeted noise or deploy advertisement traps that conversely use agents to achieve revenue goals.  As a result, these indiscriminate defenses introduce severe User Experience Risks, such as login failures or environments full of noise for real users. 
A representative example of this conflict is the recent \textit{Doubao Mobile Assistant} incident, where the agent's attempt to automate cross-application tasks triggered severe security protocols from superplatforms, such as Wechat, resulting in widespread account restrictions and service blockings. See Appendix~\ref{app:doubao_case} for details.

Despite these defensive realities, the academic community remains largely fixated on an ``Attack vs. Anti-Attack'' paradigm. Existing research predominantly focuses on two axes: (1) enhancing task utility, and (2) improving agent robustness against active platform perturbations (i.e., Anti-Attack). However, this perspective overlooks the prerequisite ``Detect vs. Anti-Detect'' paradigm. Detection acts as the gatekeeper: given the potential risks to user experience, platforms will inevitably prioritize distinguishing agents from humans to filter traffic before deploying any indiscriminate attacks.
Consequently, to achieve a harmonious coexistence with the ecosystem, agents must evolve beyond mere robustness to possess anti-detection capabilities, specifically Humanization.



\begin{figure}[t]
  \begin{center}
    \centerline{\includegraphics[width=1.0\textwidth]{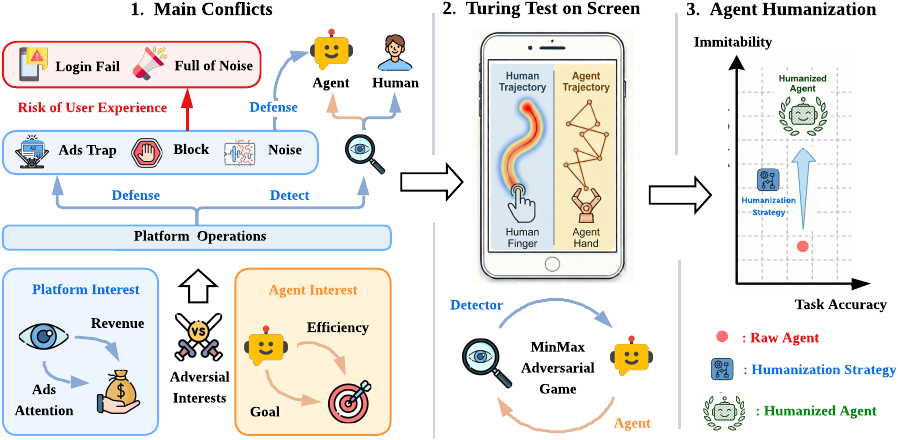}}
    \caption{The adversarial landscape between GUI Agents and Mobile Platforms. The figure illustrates three key stages: (1) \textbf{Main Conflicts:} Adversarial interests lead platforms to deploy defenses such as login blocks and ad traps. (2) \textbf{Turing Test on Screen:} The core detection mechanism relies on distinguishing natural human trajectories from agent trajectories. (3) \textbf{Agent Humanization:} We propose an adversarial humanization task to transform raw agents into humanized agents by increasing their imitability to bypass detection while maintaining task accuracy.}
    \label{fig:intro}
  \end{center}
\end{figure}

To bridge this gap, we formally define the problem of \textit{Agent Humanization} and systematically investigate the adversarial dynamics between detection and anti-detection in the era of GUI agents. We extend the Turing Test~\cite{turing1950computing} to the field of GUI Agents and introduce the concept of \textbf{``Turing Test on Screen''}.
Unlike the classical Turing Test, which evaluates human-like intelligence through textual dialogue, our paradigm evaluates human-like behavior through touch and sensor events on mobile interfaces. This draws inspiration from behavioral biometrics, where touch dynamics are traditionally used for user authentication~\cite{zaidi2021touch, frank2013touchalytics,alrawili2024comprehensivesurveybiometricuser}. In this context, the interaction is considered as an adversarial game, which is formulated as a \textbf{MinMax optimization problem}~\cite{von1944theory,goodfellow2014generative} where the Detector seeks to maximize the distinction between human and agent behaviors, while the GUI Agent seeks to minimize this \textbf{distinction} without decreasing \textbf{task utility}.

Guided by this formulation, we conduct a comprehensive study to assess the current state of agent detectability. We collect a large-scale dataset comprising detailed motion events such as touch coordinates, velocity and sensor events from both human users and a wide range of state-of-the-art GUI Agents. Our empirical analysis reveals that raw agents are highly susceptible to detection due to unnatural kinematic features. Based on these findings, we construct the Agent Humanization Benchmark (AHB) to evaluate the trade-off between human-like \textbf{imitability} and task success \textbf{utility}. Furthermore, we propose multiple humanization strategies designed to evade detection, conducting both \textbf{theoretical proofs} in Section~\ref{sec:methodology} and Appendix~\ref{app:theory} and \textbf{empirical experiments} in Section~\ref{sec:experiments} to prove their effectiveness, providing a roadmap for future agent development.

The contributions of this paper are summarized as follows:

\begin{itemize}[leftmargin=10pt]
    \item  We are the first to extend the Turing Test to the field of GUI Agents and introduce
the concept of ``Turing Test on Screen''. We formally define the adversarial paradigm between the Detector and the GUI Agent, establishing a theoretical framework for studying agent detectability in GUI environments.
    \item We construct a rich dataset containing granular \texttt{MotionEvent} and \texttt{SensorEvent} sequences, enabling high-fidelity analysis of behavioral differences between humans and GUI agents.
    \item We are the first to propose specific detection metrics and establish the Agent Humanization Benchmark (AHB) to quantitatively assess agent imitability and utility.
    \item We design and evaluate several humanity modules, ranging from heuristic noise injection to data-driven history matching, which improve agent imitability both theoretically and empirically. Our code and data are publicly available at \footnote{https://github.com/Gebro13/Passing-the-Turing-Test-on-Screen-Agent-Humanization-Benchmark} and \footnote{https://huggingface.co/datasets/lyyang2766/Passing-the-Turing-Test-on-Screen-Agent-Humanization-Benchmark/tree/main}.
\end{itemize}

Ultimately, this work underscores a pivotal transition in the evolution of AI agents: moving beyond the question of \textit{whether} an agent can perform a task, to \textit{how} it performs it within a human-centric ecosystem. As the ``Turing Test on Screen'' becomes inevitable for digital access, the ability to exhibit human-like behavioral nuances is no longer merely an aesthetic feature but a functional necessity for survival. By formalizing the interplay between detection and humanization, we hope to lay the groundwork for a future where autonomous agents can seamlessly coexist with existing digital infrastructures, safeguarding user agency in an increasingly adversarial online world.

\section{Formulation of Turing Test on Screen}
\label{sec:problem_formulation}


We formally define the \textbf{Turing Test on Screen} as a Min-Max adversarial game~\cite{von1944theory} between two entities: a \textbf{Detector} $D_{\Theta}$ (the platform) and a \textbf{GUI Agent} $G_{\Phi}$ (the operator). The detector aims to maximize classification accuracy, while the agent minimizes detection probability subject to task utility constraints.

\subsection{Interaction Modeling}

The interaction between the agent and the mobile OS occurs at two distinct layers: the logical \textit{action} level and the physical \textit{event} level.
Agent-OS interaction is decoupled into two hierarchical layers:

\begin{description}[leftmargin=5pt]
    \item[Agent Level:] At each step $t$, $G_{\Phi}$ generates a high-level UI command $a_t$ (e.g., tap, swipe) based on the environmental state $s_t$:
    \begin{equation}
        a_t = G_{\Phi}(s_t), \quad s_{t+1} = \mathcal{T}(s_t, a_t)
    \end{equation}
    where $\mathcal{T}$ denotes the state transition function.
    
    \item[Event Level:]On a mobile phone, a single logical action $a_t$ is not a simple data point; rather, it acts as a trigger that invokes multiple underlying hardware sensors, generating a set of fine-grained events $E_t$. We define the event mapping function $f: a \to \{e\}$ such that:
\begin{equation}
    E_t = \{e_{t,1}, e_{t,2}, \dots, e_{t,k}\} = f(a_t)
\end{equation}
 These events $e \in E_t$ are categorized as: (1) \textbf{Motion Events ($M$)} representing touch dynamics (coordinates, pressure); and (2) \textbf{Sensor Events ($S$)} representing physical signals (gyroscope, magnetometer). 
 
Thus, $e \in M \cup S$. The complete behavioral trace observed by the system up to time $T$ is the union of all triggered events: $\mathcal{E}_{1:T} = \bigcup_{t=1}^{T} E_t$.

\end{description}

\subsection{The Adversarial Game}
The benchmark evaluates whether the event sequence $\mathcal{E}$ is distinguishable from human-generated patterns.

\paragraph{Detector's Objective}
$D_{\Theta}$ acts as a discriminator evaluating the accumulated stream $\mathcal{E}_{1:t}$. For any action sequence, it outputs a probability $y_t = D_{\Theta}(\mathcal{E}_{1:t}) \in [0, 1]$, where $y_t \to 1$ denotes a Human classification and 0 denotes an agent. $D_{\Theta}$ maximizes its discrimination power:
\begin{equation}
    \max_{\Theta} \mathcal{L}_D = \mathbb{E}_{\mathcal{E} \sim \mathcal{H}} [\log D_{\Theta}(\mathcal{E})] + \mathbb{E}_{\mathcal{E} \sim G_{\Phi}} [\log (1 - D_{\Theta}(\mathcal{E}))]
\end{equation}
where $\mathcal{H}$ and $G_{\Phi}$ denote the event distributions of humans and agents, respectively.

\paragraph{Agent's Objective}
$G_{\Phi}$ must optimize its parameters $\Phi$ to balance \textit{Imitability} and \textit{Utility} via a regularized minimization:
\begin{equation}
    \min_{\Phi} \mathcal{L}_G = \mathbb{E}_{s \sim \mathcal{S}} \left[ \sum_{t=1}^{T} \mathbb{I}(D_{\Theta}(\mathcal{E}_{1:t}) < \tau) - \lambda \cdot R_{\text{task}}(G_{\Phi}) \right]
\end{equation}
where $\tau$ is the detection threshold, $\mathbb{I}(\cdot)$ is the indicator function, and $R_{\text{task}}$ represents the task success rate. The multiplier $\lambda$ governs the trade-off, ensuring humanization does not compromise functional capability. This framework provides the theoretical foundation for the \textit{Agent Humanization Benchmark} (AHB).

\section{Data Collection and Preliminary Study}
\label{sec:data_analysis}

We first collect a large-scale data and make some preliminary studies. This stage focuses on understanding the behavioral signatures of standard GUI agents compared to authentic human users.

\subsection{Dataset Collection} 
Our dataset captures interactions across 21 diverse applications categorized into five clusters (e.g., Social Media, Shopping, Trip Planning; see Table~\ref{tab:clusters and applications} in Appendix). Data is collected from two primary sources:
\begin{itemize}[leftmargin=10pt]
    \item \textbf{Human Users:} Four sub-populations (Young Man/Woman, Middle-aged, and Elderly) to capture physiological and age-related behavioral variances.
    \item \textbf{Autonomous Agents:} Interactions generated by state-of-the-art models including UI-TARS~\cite{qin2025uitarspioneeringautomatedgui}, MobileAgent-E (GPT-4o)~\cite{wang2025mobileagenteselfevolvingmobileassistant}, MobileAgent-E (Claude-3.5-Sonnet), AgentCPM~\cite{zhang2025agentcpmguibuildingmobileuseagents}, and AutoGLM~\cite{liu2024autoglmautonomousfoundationagents}.
\end{itemize}


Following~\cite{frank2012touchalytics}, we derive \textbf{24 statistical features} to capture unique biomechanical signatures. These include \textit{Kinematics}: e.g. velocity, acceleration, \textit{Geometry}: e.g. path efficiency, curvature, and \textit{Temporal Dynamics}: e.g. duration, latency. 
To quantify the relevance of the feature and to prove its effectiveness for the detector, we calculate the \textbf{Information Gain (IG)}~\cite{shannon1948mathematical} for each attribute relative to the source identity $U$. See details in Table~\ref{tab:touch_features} and Figure~\ref{fig:touch_features} in Appendix~\ref{sec:feature_extraction}. The strategic decision to focus on touch dynamics rather than hardware sensor streams is justified in Appendix~\ref{app:discussion}.


\begin{figure*}[t]
    \centering
    \vspace{-10pt}
    \begin{subfigure}[ht]{0.44\textwidth}
        \centering

        \includegraphics[width=1.0\textwidth]{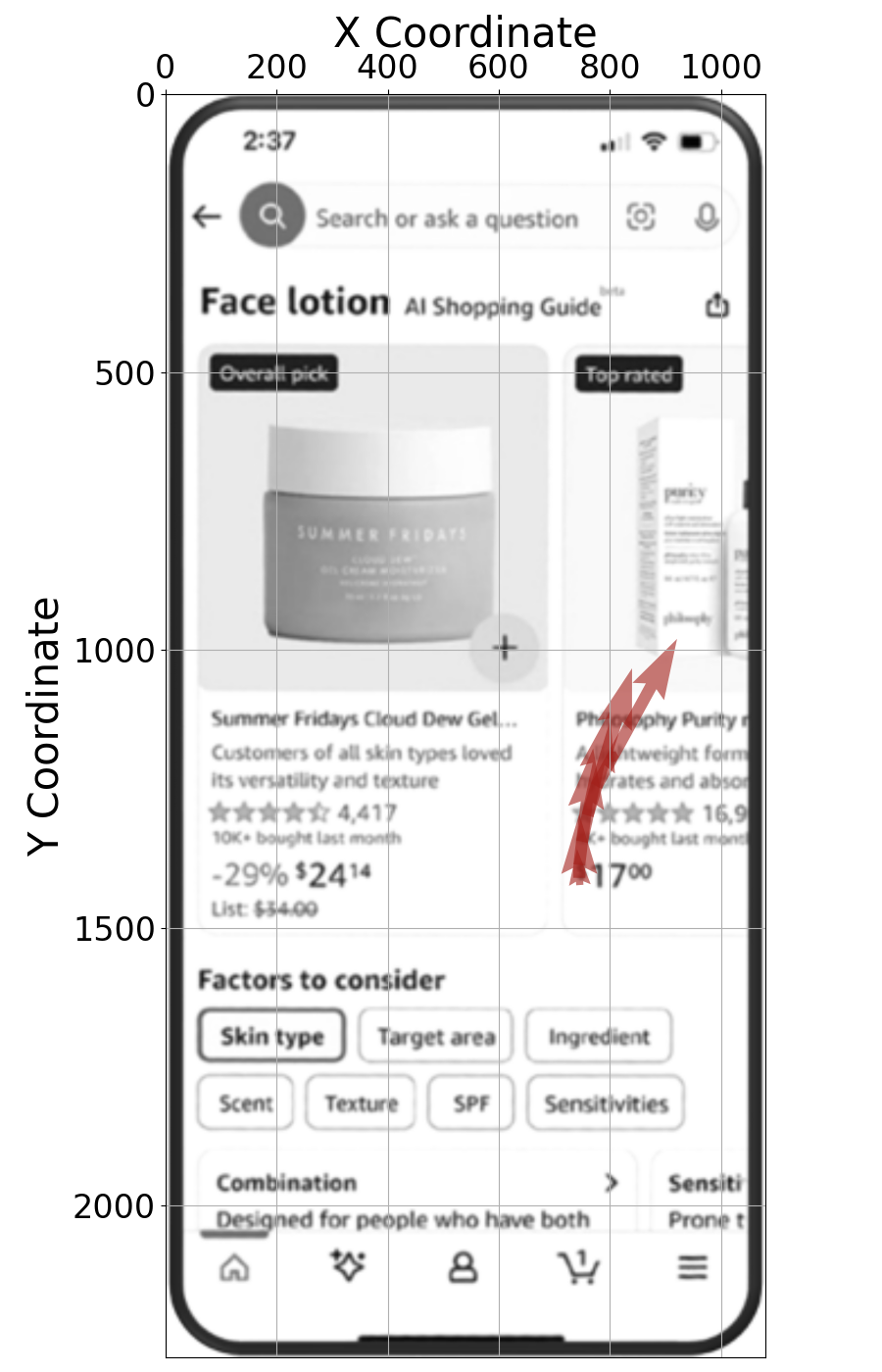}
        \caption{Human Swipe}
        \label{fig:human swipe}
    \end{subfigure}
    \hfill 
    \begin{subfigure}[ht]{0.44\textwidth}
        \centering
        \includegraphics[width=1.0\textwidth]{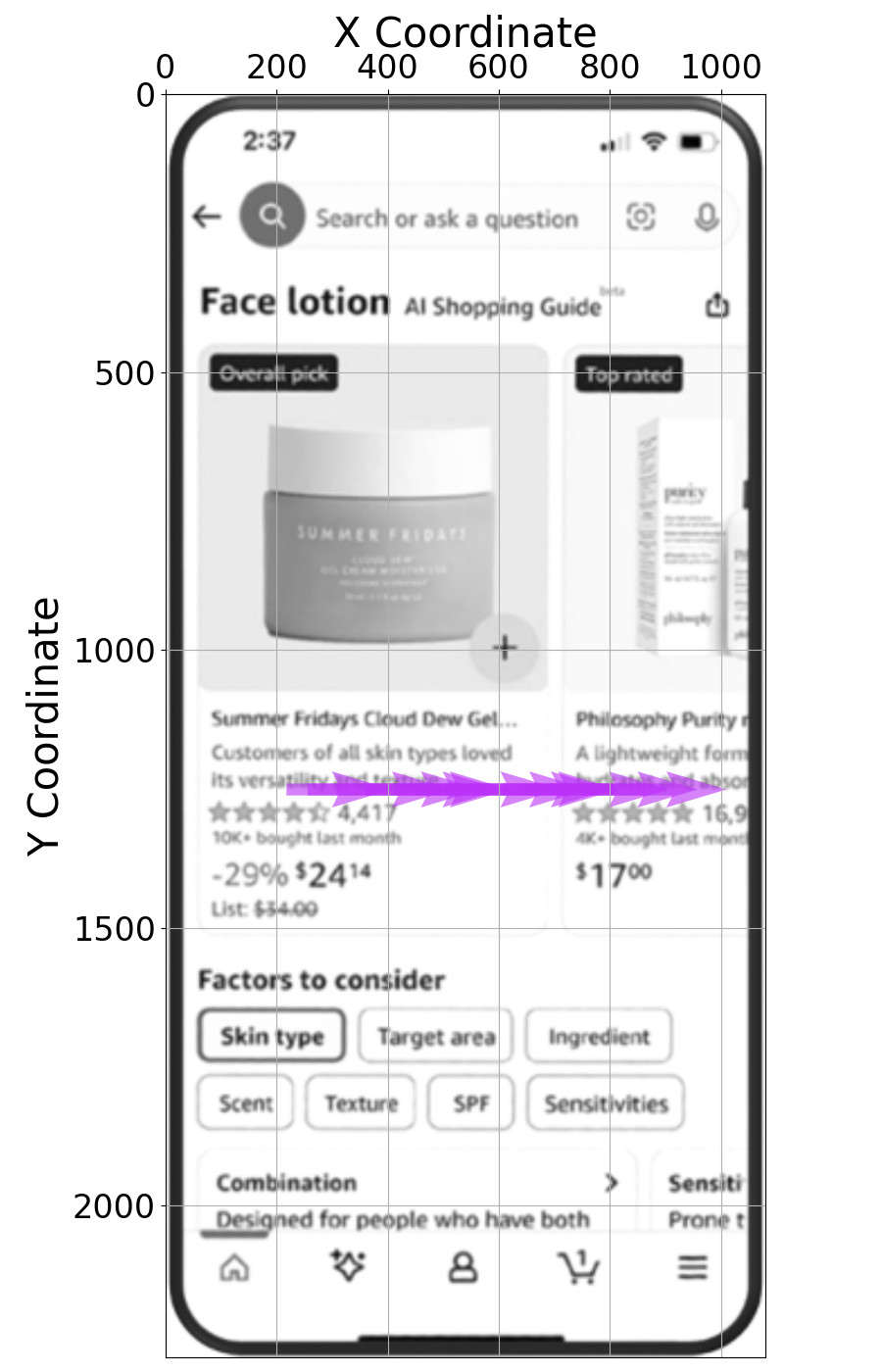}
        
        \caption{Agent Swipe}
        \label{fig: agent swipe}
    \end{subfigure}
    \caption{The difference between human and agent swipe.}
    \label{fig: human and agent swipe}
\end{figure*}



\begin{figure}[t]
    \centering
    \vspace{-10pt}
    \begin{subfigure}[b]{0.44\textwidth}
        \centering

        \includegraphics[width=1.0\textwidth]{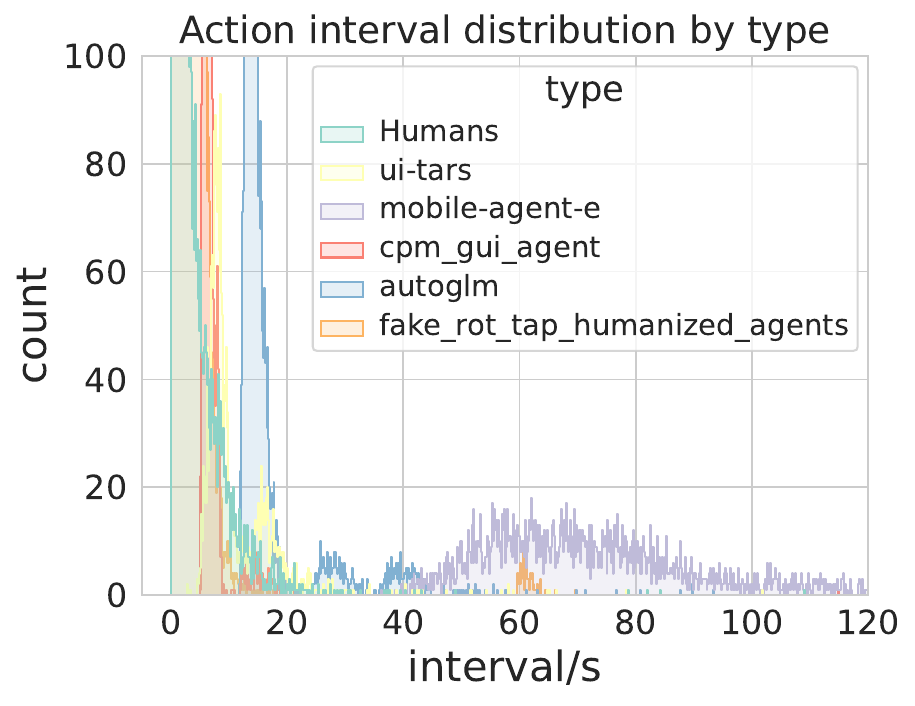}
        \caption{Action Interval}
        \label{fig:action interval}
    \end{subfigure}
    \hfill 
    \begin{subfigure}[b]{0.44\textwidth}
        \centering
        \includegraphics[width=1.0\textwidth]{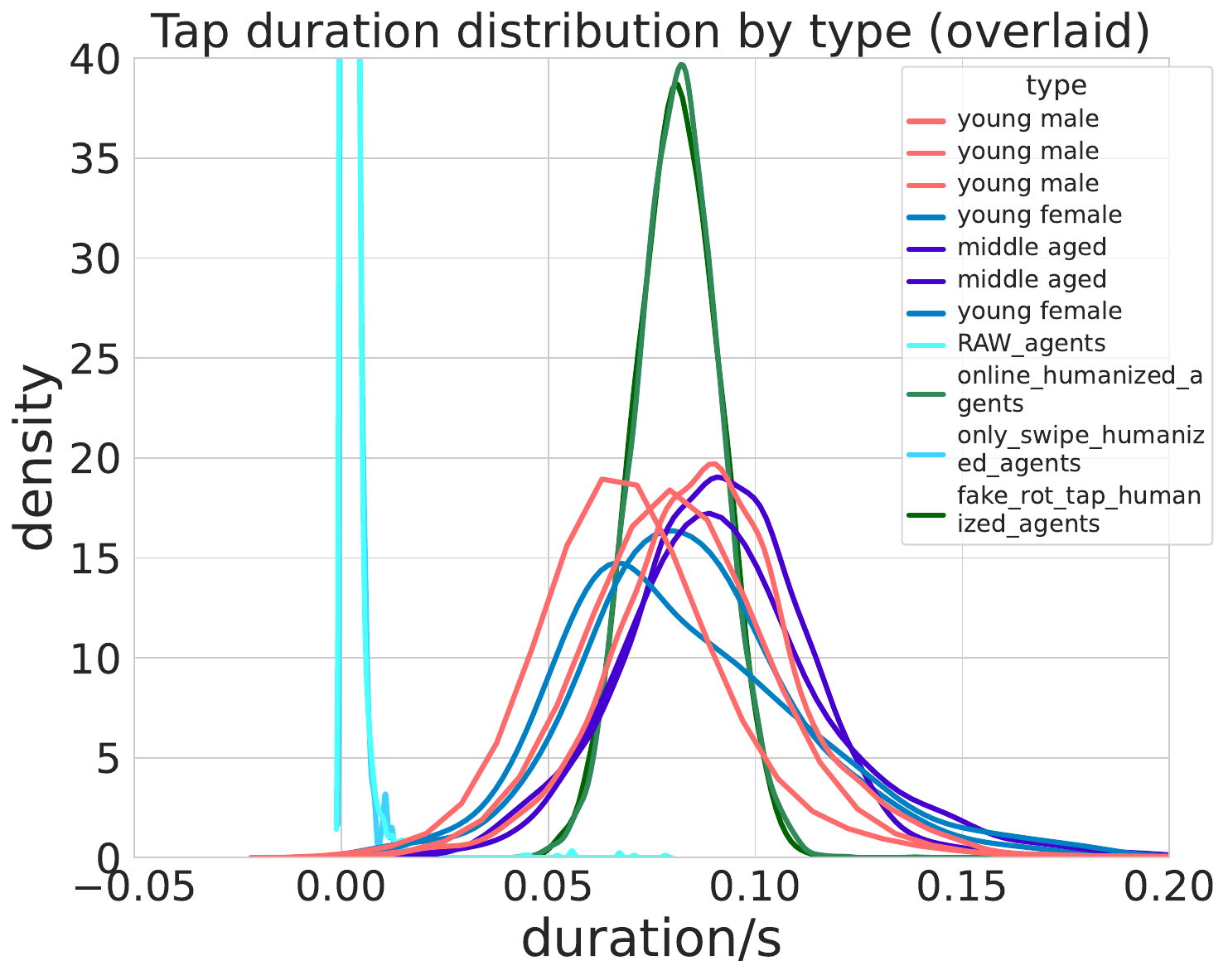}
        
        \caption{Tap Duration}
        \label{fig:tap duration}
    \end{subfigure}
    \caption{The visualization of action interval and tap duration differences between human and agents.}
    \label{fig:main_figure}
\end{figure}

\subsection{Preliminary Qualitative Study}
We conduct a preliminary  study and demonstrate the differences across two aspects, as follows:

\subsubsection{Trajectory Linearity}
As shown in Figure~\ref{fig: human and agent swipe}, agent trajectories are typically rigid, linear vectors lacking the physiological arcs and motor noise of human gestures. 

\subsubsection{Action Intervals.} Human intervals follow a long-tailed distribution peaking near zero, whereas agents suffer from significant inference overhead. As shown in Figure~\ref{fig:action interval}, \texttt{ui-tars} clusters at $5$--$10$s, while \texttt{mobile-agent-e} reaches $50$--$80$s. These delays are sufficient for reliable detection.

\subsubsection{Tap Duration.} Human taps form a Gaussian distribution ($0.05$s--$0.10$s) due to skin elasticity, while agent inputs manifest as near-zero spikes, reflecting instantaneous event injection.

In summary, vanilla agents fail the ``Turing Test on Screen'' due to robotic linearity and non-human temporal rhythms, making it trivial for a detector to identify.

\section{The Agent Humanization Benchmark}
\label{sec:benchmark}
To quantify humanization effectiveness, we introduce the \textbf{Agent Humanization Benchmark (AHB)}, a framework evaluating agents across two axes: \textbf{Imitability} and \textbf{Utility}.

\subsection{Evaluation Metrics}

\subsubsection{Imitability}
Imitability measures the behavioral resemblance between agents and humans, quantified inversely by the \textbf{Classification Accuracy (ACC)} of various detection algorithms. A detector ACC approaching $0.5$ (random guessing) signifies that the agent has successfully passed the ``Turing Test on Screen'' for that detection modality.

\subsubsection{Utility}
Since humanization (e.g., noise, delays) may degrade efficiency, we monitor the \textbf{Task Success Rate} to ensure functionality is preserved. An ideal strategy achieves high imitability with minimal success rate degradation; strategies that bypass detection but fail at tasks are considered unsuccessful.

\subsection{The Hierarchy of Detectors}
The AHB categorizes detectors ($D$) by action type and complexity, ranging from simple heuristics to robust machine learning models to assess agents against a defense hierarchy.

\subsubsection{Rule-based Detectors}
These serve as the first line of defense, utilizing predefined statistical thresholds to identify anomalies in individual attributes. Metrics include \textbf{Swipe Accuracy}, \textbf{Time Interval Accuracy}, and \textbf{Tap Duration Accuracy}. Together, they filter out agents that fail to adhere to basic biological constraints.

\subsubsection{Learning-based Detectors}
To identify subtle, non-linear patterns in trajectories, we employ SVM~\cite{cortes1995support} and XGBoost~\cite{chen2016xgboost} classifiers. Trained on the 24-dimensional feature vector in Section~\ref{sec:data_analysis}, these models capture complex correlations between features. Evading these detectors requires the agent to mimic the holistic distribution of human behavior rather than just isolated features.
A deeper discussion on the robustness of these interpretable detectors is provided in Appendix~\ref{app:discussion}.
\subsection{Humanization Methodologies}
\label{sec:methodology}

To bridge the gap between mechanical agent behavior and human-like interaction, we propose two architectural paradigms for humanization: (1) \textbf{Internal Injection}, embedding human priors directly into the LMM (e.g., via fine-tuning); and (2) \textbf{External Wrapper}, a post-processing module $H$ that transforms raw actions $a_{\text{raw}}$ into humanized sequences $a_{\text{human}}$ before execution. 

As a foundational study, we adopt the \textbf{External Wrapper} approach for its model-agnostic compatibility. We propose four distinct strategies, starting with heuristic signal processing.

\subsubsection{Strategy 1: Heuristic Noise Injection (B-Spline)}
To counteract the perfect linearity of raw agent swipes, we utilize B-spline smoothing~\cite{de1972calculating}. Instead of a linear path, we generate a curve $S(t)$ based on control points $C = \{c_0, \dots, c_n\}$ scattered normally around the direct chord:
\begin{equation}
    S(t) = \sum_{i=0}^{n} N_{i,p}(t) \cdot c_i
\end{equation}
where $N_{i,p}(t)$ are B-spline basis functions of degree $p$. While computationally efficient and real-time capable, this method may remains statistically distinguishable if the noise distribution does not precisely match human biomechanical curvature.

\subsubsection{Strategy 2: Data-Driven History Matching}
To achieve higher fidelity, we leverage real human trajectories from our dataset. Given a task vector $\vec{v}_{\text{task}}$, we sample a reference trajectory $\mathcal{T}_{\text{ref}}$ with similar distance and direction, then apply an affine transformation to align it. Each point $p \in \mathcal{T}_{\text{ref}}$ is transformed to $p'$ via:
\begin{equation}
    p' = s \cdot R(\theta) \cdot (p - p_{\text{ref\_start}}) + P_{\text{start}}
\end{equation}
where $R(\theta)$ is a rotation matrix based on the angular difference, and $s = \|\vec{v}_{\text{task}}\| / \|\vec{v}_{\text{ref}}\|$ is the scaling factor. 

This strategy preserves authentic velocity profiles and micro-jitters, though it requires an offline database.

\subsubsection{Theoretical Foundations}
We provide formal proofs for three theorems in Appendix~\ref{app:theory}. Theorem~\ref{thm:opt_detector} bounds a detector's efficacy by the Jensen-Shannon divergence between human and agent distributions. Theorem~\ref{thm:smoothing} proves that variance injection (e.g., B-spline) strictly reduces this divergence. Finally, Theorem~\ref{thm:history_matching} demonstrates that History Matching is asymptotically superior, as agent behavior converges toward the true human distribution.

\subsubsection{Strategy 3: Fake Actions}
To mask the long inference latencies identified in Section~\ref{sec:data_analysis}, the wrapper injects micro-interactions (e.g., slight scrolls or hovers) during idle periods. These non-functional inputs break the long-tail interval distribution, shifting the agent's temporal profile toward continuous human-like interaction.

\subsubsection{Strategy 4: Longer Presses}
To humanize the near-zero tap durations of raw agents, we sample durations from a Gaussian distribution fitted to human tap data, ensuring touch events mimic realistic physical contact.
\section{Experiments \& Analysis}
\label{sec:experiments}

In this section, we provide our whole detection and humanization results, as well as in-depth feature analysis to find the easiest and hardest feature to humanize.

\begin{table*}[!ht]
\centering
\footnotesize

\caption{Experiment results of humanization strategies across five distinct application domains. We compare the baseline (RAW) against various combinations of humanization methods, including swipe trajectory adjustment (B-spline vs. History Matching), interval noise injection (Fake), and tap duration adjustment (Long). The right block reports the detection accuracy (lower is better) of different classifiers SVM and XGBoost and rule-based checks, alongside the final Task Accuracy.}
\resizebox{\textwidth}{!}{%
\begin{tabular}{llcccccccccc}

\toprule
\multirow{3}{*}{Task} & \multirow{3}{*}{Mode} & \multicolumn{4}{c}{Humanization Methods} & \multicolumn{6}{c}{Detection Rules} \\
\cmidrule(lr){3-6} \cmidrule(lr){7-12}
& & \multicolumn{2}{c}{swipe} & interval & tap & \multicolumn{3}{c}{swipe} & interval & tap & utility \\
\cmidrule(lr){3-4} \cmidrule(lr){5-5} \cmidrule(lr){6-6} \cmidrule(lr){7-9} \cmidrule(lr){10-10} \cmidrule(lr){11-11} \cmidrule(lr){12-12}
& & b-spline & history & fake & long & max single & SVM acc & XGB acc & int.acc & tap.acc & task acc \\
\midrule

\multirow{10}{*}{Social Media} 
& RAW & x & x & x & x & 0.9969 & 0.9817 & 1.0000 & 0.8838 & 0.9977 & 0.4833 \\
& online & x & \checkmark & x & \checkmark & 0.8286 & 0.8750 & 0.9773 & 0.8798 & 0.6341 & 0.5625 \\
& online & x & \checkmark & x & x & 0.7651 & 0.9756 & 1.0000 & 0.9060 & 0.9976 & 0.6667 \\
& online & x & \checkmark & \checkmark & \checkmark & 0.9998 & 0.9963 & 0.9993 & 0.5999 & 0.6210 & 0.4500 \\
& offline & x & \checkmark & x & x & 0.7190 & 0.9633 & 0.9450 & 0.8838 & 0.9977 & - \\
& offline & \checkmark & x & x & x & 0.8507 & 0.9633 & 0.9817 & 0.8838 & 0.9977 & - \\
& offline & x & x & x & \checkmark & 1.0000 & 0.9773 & 1.0000 & 0.8798 & 0.6341 & - \\
& offline & x & x & \checkmark & x & 0.9969 & 0.9817 & 1.0000 & 0.5274 & 0.9977 & - \\
& offline & x & \checkmark & \checkmark & \checkmark & 0.8286 & 0.8750 & 0.9773 & 0.5260 & 0.6341 & - \\
& offline & \checkmark & x & \checkmark & \checkmark & 0.8507 & 0.9633 & 0.9817 & 0.5274 & 0.6137 & - \\
\midrule

\multirow{10}{*}{Shopping}
& RAW & x & x & x & x & 0.9982 & 0.9887 & 1.0000 & 0.9056 & 0.9840 & 0.8148 \\
& online & x & \checkmark & x & \checkmark & 0.9249 & 0.9593 & 0.9889 & 0.8969 & 0.6133 & 0.7069 \\
& online & x & \checkmark & x & x & 0.9769 & 0.9570 & 0.9785 & 0.9196 & 0.9971 & 0.9500 \\
& online & x & \checkmark & \checkmark & \checkmark & 0.9989 & 0.9962 & 0.9986 & 0.5718 & 0.6278 & 0.6000 \\
& offline & x & \checkmark & x & x & 0.8780 & 0.9323 & 0.9925 & 0.9056 & 0.9840 & - \\
& offline & \checkmark & x & x & x & 0.9336 & 0.9774 & 0.9925 & 0.9056 & 0.9840 & - \\
& offline & x & x & x & \checkmark & 1.0000 & 0.9778 & 1.0000 & 0.8969 & 0.6133 & - \\
& offline & x & x & \checkmark & x & 0.9982 & 0.9887 & 1.0000 & 0.5089 & 0.9840 & - \\
& offline & x & \checkmark & \checkmark & \checkmark & 0.9249 & 0.9593 & 0.9889 & 0.5104 & 0.6133 & - \\
& offline & \checkmark & x & \checkmark & \checkmark & 0.9336 & 0.9774 & 0.9925 & 0.5089 & 0.6105 & - \\
\midrule
\multirow{10}{*}{Video Streaming}
& RAW & x & x & x & x & 1.0000 & 0.9850 & 1.0000 & 0.9186 & 0.9956 & 0.6094 \\
& online & x & \checkmark & x & \checkmark & 0.9494 & 0.9502 & 0.9950 & 0.9120 & 0.6186 & 0.8393 \\
& online & x & \checkmark & x & x & 0.9929 & 0.9942 & 0.9770 & 0.9112 & 0.9990 & 0.7500 \\
& online & x & \checkmark & \checkmark & \checkmark & 0.9993 & 0.9968 & 0.9974 & 0.5621 & 0.6276 & 0.7500 \\
& offline & x & \checkmark & x & x & 0.9306 & 0.9300 & 0.9850 & 0.9186 & 0.9956 & - \\
& offline & \checkmark & x & x & x & 0.9390 & 0.9650 & 0.9850 & 0.9186 & 0.9956 & - \\
& offline & x & x & x & \checkmark & 1.0000 & 0.9950 & 1.0000 & 0.9120 & 0.6186 & - \\
& offline & x & x & \checkmark & x & 1.0000 & 0.9850 & 1.0000 & 0.5195 & 0.9956 & - \\
& offline & x & \checkmark & \checkmark & \checkmark & 0.9494 & 0.9502 & 0.9950 & 0.5196 & 0.6186 & - \\
& offline & \checkmark & x & \checkmark & \checkmark & 0.9390 & 0.9650 & 0.9850 & 0.5195 & 0.6129 & - \\
\midrule
\multirow{10}{*}{Trip Planning}
& RAW & x & x & x & x & 0.9984 & 0.9817 & 0.9954 & 0.7998 & 0.9954 & 0.7500 \\
& online & x & \checkmark & x & \checkmark & 0.8153 & 0.9479 & 0.9905 & 0.8047 & 0.6264 & 0.7143 \\
& online & x & \checkmark & x & x & 0.8721 & 0.9278 & 0.9896 & 0.8640 & 0.9981 & 0.7000 \\
& online & x & \checkmark & \checkmark & \checkmark & 0.9992 & 0.9945 & 0.9989 & 0.5011 & 0.6110 & 0.1500 \\
& offline & x & \checkmark & x & x & 0.8421 & 0.8995 & 0.9863 & 0.7998 & 0.9954 & - \\
& offline & \checkmark & x & x & x & 0.8855 & 0.9726 & 0.9909 & 0.7998 & 0.9954 & - \\
& offline & x & x & x & \checkmark & 0.9970 & 0.9953 & 1.0000 & 0.8047 & 0.6264 & - \\
& offline & x & x & \checkmark & x & 0.9984 & 0.9817 & 0.9954 & 0.5718 & 0.9954 & - \\
& offline & x & \checkmark & \checkmark & \checkmark & 0.8153 & 0.9479 & 0.9905 & 0.5704 & 0.6264 & - \\
& offline & \checkmark & x & \checkmark & \checkmark & 0.8855 & 0.9726 & 0.9909 & 0.5718 & 0.6089 & - \\
\midrule
\multirow{10}{*}{Office \& Learning}
& RAW & x & x & x & x & 1.0000 & 0.9826 & 1.0000 & 0.8744 & 0.9974 & 0.5750 \\
& online & x & \checkmark & x & \checkmark & 0.7782 & 0.9265 & 0.9926 & 0.8803 & 0.6263 & 0.5167 \\
& online & x & \checkmark & x & x & 0.9446 & 0.9571 & 1.0000 & 0.8967 & 0.9988 & 0.5375 \\
& online & x & \checkmark & \checkmark & \checkmark & 0.9997 & 0.9953 & 0.9989 & 0.5606 & 0.6177 & 0.2875 \\
& offline & x & \checkmark & x & x & 0.7720 & 0.9391 & 0.9739 & 0.8744 & 0.9974 & - \\
& offline & \checkmark & x & x & x & 0.8178 & 0.9739 & 0.9913 & 0.8744 & 0.9974 & - \\
& offline & x & x & x & \checkmark & 1.0000 & 0.9926 & 0.9926 & 0.8803 & 0.6263 & - \\
& offline & x & x & \checkmark & x & 1.0000 & 0.9826 & 1.0000 & 0.5183 & 0.9974 & - \\
& offline & x & \checkmark & \checkmark & \checkmark & 0.7782 & 0.9265 & 0.9926 & 0.5182 & 0.6263 & - \\
& offline & \checkmark & x & \checkmark & \checkmark & 0.8178 & 0.9739 & 0.9913 & 0.5183 & 0.6134 & - \\

\bottomrule
\end{tabular}%
}
\vspace{-10pt}
\label{tab:main table}
\end{table*}

\subsection{Comprehensive Results of Humanization}

\subsubsection{Baseline Vulnerability Analysis}
Results in Table~\ref{tab:main table} confirm the high detectability of non-humanized agents. Across all clusters, the XGBoost and SVM classifiers achieve near-perfect accuracy (e.g., $0.995$ and $0.98$), demonstrating that raw agent trajectories contain distinct mechanical patterns that are trivial for ML models to identify.

\subsubsection{Experimental Settings}
We implement four humanization strategies targeting distinct dimensions:
\begin{itemize}[leftmargin=10pt]
    \item \textbf{Swipe:} \textit{B-spline Noise} and \textit{History Matching} serve as mutually exclusive trajectory generation methods.
    \item \textbf{Tap \& Interval:} \textit{Long Press} (tap duration) and \textit{Fake Action} (temporal intervals) can be superimposed on any swipe strategy.
\end{itemize}
Evaluations are conducted in \textbf{online} mode: real-time execution; measures task success and \textbf{offline} mode: post-hoc modification; isolates detection evasion from static utility.

\subsubsection{Main Results}

\begin{figure*}[ht]
  \vspace{-10pt}
  \begin{center}
    \centerline{\includegraphics[width=1.05\textwidth]{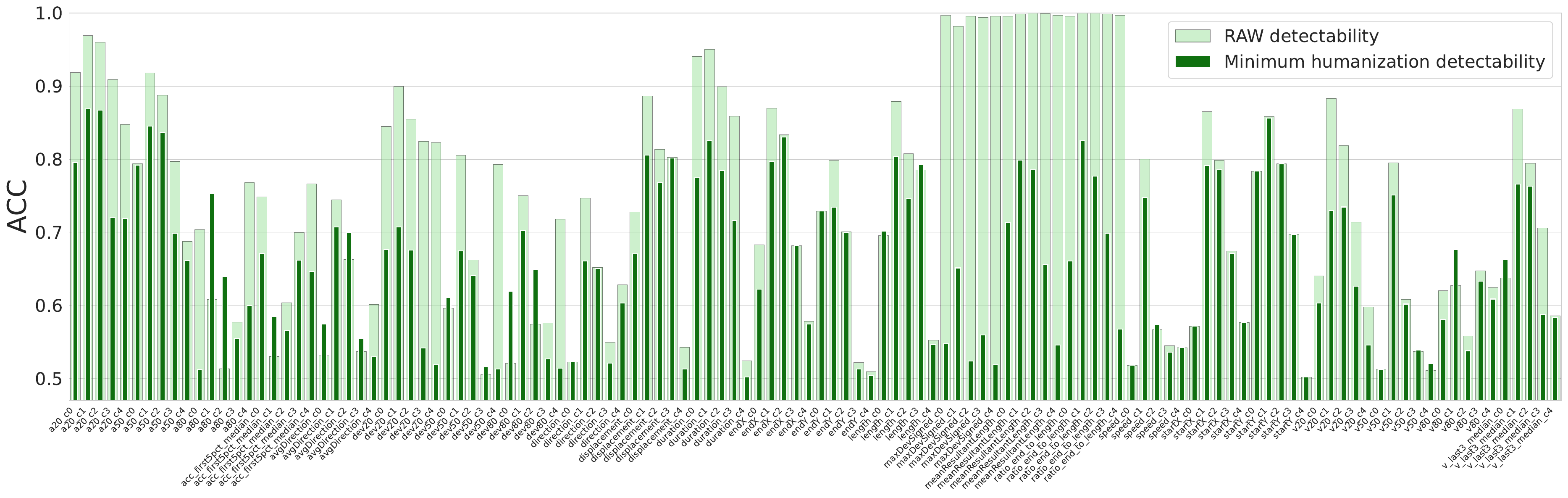}}
    \vspace{-10pt}
    \caption{Impact of humanization on detection accuracy across feature clusters. The chart compares the detection accuracy of raw agent traces (light green) versus the minimum detectability achieved after applying humanization methods (dark green). The consistent drop in accuracy across most clusters demonstrates the effectiveness of the proposed methods in evading detection.}
    \label{fig:impact of humanization across feature clusters}
  \end{center}
  \vspace{-10pt}
\end{figure*}

Results in Table 3 reveal a clear performance hierarchy: \textit{History Matching} consistently outperforms \textit{B-spline Noise}. In Cluster 0, History Matching achieves a significantly lower detection score (0.7190 vs. 0.8507) and reduces XGBoost accuracy to 0.9450, compared to B-spline's 0.9817. \textbf{This confirms that adapting empirical human data provides a more robust defense than synthetic smoothing.}

\textit{Fake Action} decisively neutralizes interval-based rules, reducing accuracy to $\sim$0.52 (near random guessing). \textbf{However, this introduces a trade-off: while enhancing imitability, non-functional steps may disrupt task flow and degrade online utility.} Meanwhile, \textit{Long Press} effectively bypasses tap-duration heuristics, reducing detection accuracy in Cluster 1 from 0.9840 to 0.6133 by simply matching human duration distributions.


\textbf{Contrary to a simple modular ``plug-and-play'' assumption, our results reveal a complex interplay between strategies.} While \textit{Fake Action} successfully neutralizes interval-based rules, the second and fourth rows of the online results in Table~\ref{tab:main table} shows that its injection of fixed, repetitive motions can actually increase the overall detectability of the trajectory. This suggests that naive fake action injection lacks orthogonality; by introducing predictable mechanical artifacts, it facilitates detection in other feature dimensions. Consequently, achieving comprehensive imitability requires a more nuanced synchronization between temporal masking and trajectory generation to ensure that humanizing one dimension does not inadvertently compromise another.

\begin{figure*}[ht] 
    \centering
        \includegraphics[width=0.95\textwidth]{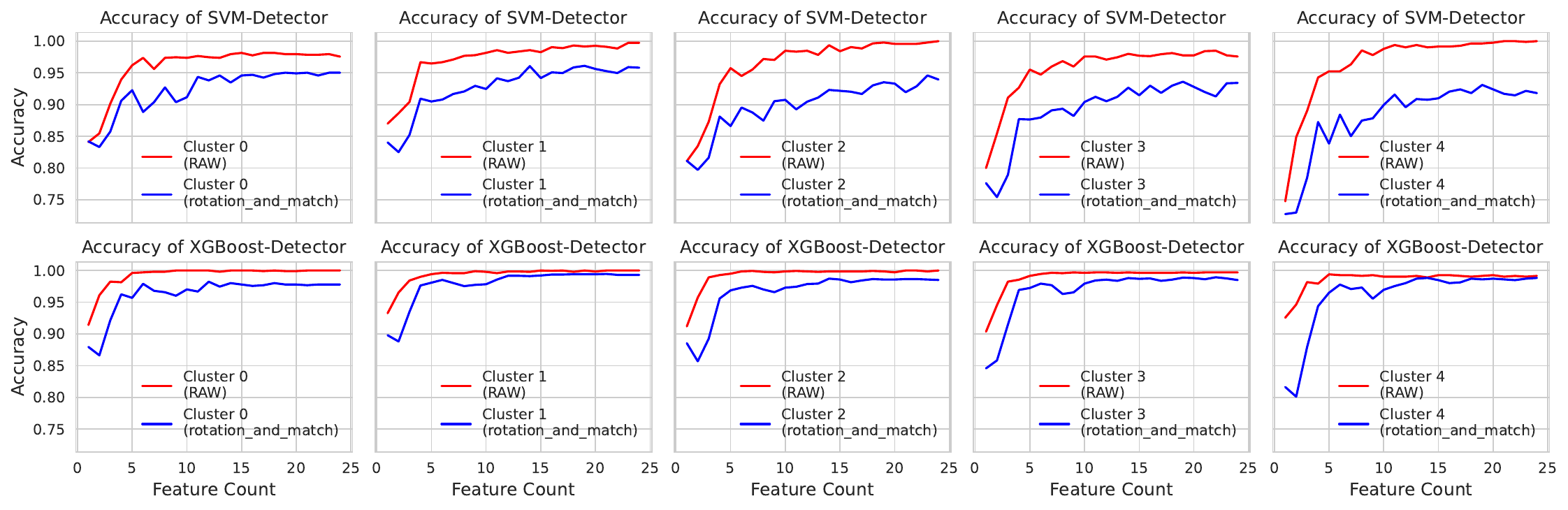}
        \label{fig:svm_acc}
    \caption{\textbf{Impact of Feature Selection on Detection Accuracy.} Comparison of (a) SVM and (b) XGBoost performance as the number of features increases.}
    \label{fig:accuracy_comparison}
\end{figure*}




\subsection{In-Depth Feature Analysis}
Table~\ref{tab:index0} evaluates the baseline (RAW) against three humanization strategies using an optimal ROC thresholding approach. Detection accuracy serves as the primary metric, where $1.0$ indicates perfect distinguishability and $0.5$ signifies successful humanization. To identify the limits of these strategies, we rank 24 behavioral features by their resistance to masking. As shown in Figure~\ref{fig:impact of humanization across feature clusters}, while certain attributes are easily humanized, others remain persistent bottlenecks tied to the fundamental architecture of GUI agents. Detailed results are available in Appendix~\ref{app:experiment results}.

\begin{table}[ht]
\centering
\vspace{-5pt}
\caption{Comparison of single-feature and model-based detection accuracy for Social Media. Results show the classification accuracy for the Raw baseline versus Online Rotation \& Match (On.RM), Offline Rotation \& Match (Off.RM), and B-Spline (BS) methods. Features are ranked by their discriminative power on the raw dataset.}
\label{tab:index0}
\setlength{\tabcolsep}{3pt} 

\begin{tabular}{lcccc}
\toprule
\textbf{Metric} & \textbf{RAW} & \textbf{On.RM} & \textbf{Off.RM} & \textbf{BS}\\
\midrule
maxDev & 0.9969 & 0.5515 & 0.6186 & 0.7556 \\
meanResultantLength & 0.9878 & 0.6818 & 0.6286 & 0.6979 \\
ratio\_end\_to\_len & 0.9878 & 0.6451 & 0.5798 & 0.5826 \\
duration & 0.8583 & 0.6907 & 0.5470 & 0.8507 \\
a20 & 0.8355 & 0.8286 & 0.7190 & 0.7686 \\
acc\_first5pct & 0.8244 & 0.5897 & 0.5532 & 0.8093 \\
a80 & 0.8154 & 0.6575 & 0.6205 & 0.6560 \\
dev80 & 0.7645 & 0.5310 & 0.6445 & 0.5476 \\
dev20 & 0.7634 & 0.5038 & 0.5560 & 0.5111 \\
dev50 & 0.7055 & 0.5510 & 0.6416 & 0.5251 \\
v80 & 0.6996 & 0.5188 & 0.6301 & 0.7026 \\
avgDirection & 0.6763 & 0.5515 & 0.5900 & 0.5872 \\
direction & 0.6734 & 0.5470 & 0.6734 & 0.5758 \\
startY & 0.6581 & 0.7472 & 0.6581 & 0.6581 \\
speed & 0.6531 & 0.5000 & 0.6100 & 0.6329 \\
startX & 0.6531 & 0.5310 & 0.6531 & 0.5730 \\
a50 & 0.6447 & 0.7789 & 0.6953 & 0.6501 \\
displacement & 0.6416 & 0.5154 & 0.6416 & 0.6387 \\
v50 & 0.6329 & 0.5407 & 0.6243 & 0.6387 \\
endX & 0.6243 & 0.5262 & 0.6243 & 0.5251 \\
endY & 0.6178 & 0.7279 & 0.6178 & 0.6232 \\
v20 & 0.6014 & 0.5479 & 0.5843 & 0.6014 \\
length & 0.5617 & 0.5271 & 0.5560 & 0.5588 \\
v\_last3\_median & 0.5419 & 0.6637 & 0.6313 & 0.5560 \\
svm\_accuracy & 0.9817 & 0.8750 & 0.9633 & 0.9633 \\
xgb\_accuracy & 1.0000 & 0.9773 & 0.9450 & 0.9817 \\
\bottomrule
\end{tabular}
\end{table}

\subsubsection{Feature Complexity vs. Detectability.} 
We randomly choose features as inputs for SVM and XGBoost to explore how the feature number affects detection accuracy. As shown in Figure~\ref{fig:accuracy_comparison}, detection accuracy scales logarithmically with feature count, plateauing after an initial steep rise. While RAW swipes reach near 100\% detectability with only 5--10 features, the \textit{rotation\_and\_match} strategy maintains a persistent detectability gap. Specifically, in SVM analysis, humanized accuracy remains between 0.85 and 0.95 even at 24 features. in XGBoost, despite faster convergence, humanized trajectories consistently underperform the RAW baseline. This confirms that our empirical humanization effectively masks mechanical patterns, forcing both linear and non-linear classifiers toward lower performance bounds across diverse contexts.

\subsubsection{The Easiest Features}
\textbf{Path shape metrics are the easiest to humanize.} As shown in Table~\ref{tab:index0}, content-agnostic metrics like \texttt{maxDev} and \texttt{ratio\_end\_to\_len} drop from near-perfect detectability ($\sim$0.99) to near-random levels (0.55--0.64) after humanization. By applying mathematical transformations like Rotation \& Match or B-splines, we can synthesize human-like motor imperfections without affecting task outcomes, making movement curvature the low-hanging fruit of humanization.

\subsubsection{The Hardest Features}
\label{sec:the hardest features}
\textbf{Precision and temporal rhythm remain resistant due to their direct conflict with Task Utility.} The first one is \textbf{Endpoint Precision.} While paths can be curved, scattering landing points is risky. In Table~\ref{tab:index0}, \texttt{endY} detectability actually doesn't decrease. Unlike humans who exhibit natural variance, agents target UI node centers. Intentionally adding coordinate noise without a granular understanding of clickable boundaries risks missing targets, forcing a trade-off between imitability and accuracy.

Another one is \textbf{Action Interval.} Agents suffer from LMM inference latency. While injecting fake actions aligns the temporal distribution with humans (see Fig.~\ref{fig:fake action interval} in Appendix~\ref{app:experiment results}), it severely compromises utility. As seen in the Trip Planning task (Table~\ref{tab:main table}), the accuracy plummets from 0.75 to 0.15. Blindly injected actions often trigger unintended state changes. Since the agent is unaware of these transitions, the original plan fails.

Hard-coded injections lack UI awareness, while agent-generated ones incur further latency. A promising future direction is to employ a lightweight \textit{Guard Agent} with sufficient visual understanding to identify safe zones for fake actions without the latency penalty of the main LMM.
\section{Related Works}
\label{sec:related_work_simpilified}

In this section, we provide related works. Due to space constraints, a comprehensive review of related works is provided in Appendix~\ref{app:related_works}.

\textbf{LMM-based GUI Agents}
Mobile automation has evolved from rigid scripts to autonomous agents powered by Large Multimodal Models (LMMs)~\cite{gpt4, gemini, llava}. Recent frameworks, such as AppAgent~\cite{appagent}, Mobile-Agent~\cite{mobile_agent,ye2508mobile}, CogAgent~\cite{cogagent}, and others~\cite{ma2024coco,li2025mobileuse,ma2024caution}, utilize visual perception to interact with interfaces, extending capabilities to web navigation~\cite{mind2web, yao2022webshop} and OS-level control~\cite{wu2025verios,cheng2025kairos}. However, these works prioritize \textbf{Task Success Rate (Utility)} via optimization techniques~\cite{gu2025mobile,lu2025arpo,xumobilerl}. Consequently, their motion control remains largely deterministic, creating a distinct behavioral gap compared to human users and leaving them vulnerable to detection.

\textbf{Adversarial Dynamics in Digital Ecosystems}
The conflict between agent efficiency and platform attention economies~\cite{lin2025superplatformsattackaiagents, allouah2025aiagentbuyingevaluation} has sparked adversarial dynamics. Existing research predominantly focuses on \textbf{Robustness} versus \textbf{Perturbation}~\cite{xi2025rise, wu2025dissectingadversarialrobustnessmultimodal, zhang2025agentsecuritybenchasb, xu2025advagentcontrollableblackboxredteaming}. Recent studies demonstrate attacks on visual grounding~\cite{gu2024agent,cui2023robustnesslargemultimodalmodels,dong2023robust}, ranging from environmental injections~\cite{liao2025eiaenvironmentalinjectionattack, chen2025evaluatingrobustnessmultimodalagents, chen2025obviousinvisiblethreatllmpowered, zhang2025attackingvisionlanguagecomputeragents} and visual adversaries~\cite{fang2024clipguidedgenerativenetworkstransferable, zhang2025qavaqueryagnosticvisualattack, de2024exploring} to backdoors~\cite{wang2024badagentinsertingactivatingbackdoor, yang2024watchagentsinvestigatingbackdoor, weng-etal-2025-foot}. Unlike these works which address functional availability, we focus on \textbf{survivability against behavioral detection}, framing the interaction as a ``Turing Test on Screen.''

\textbf{Bot Detection and Behavioral Biometrics}
Traditional bot detection~\cite{mahfouz2017survey,vastel2018fp,laperdrix2020browser} primarily identifies rigid scripts via deterministic patterns and fingerprints. In the mobile domain, detection leverages \textbf{Behavioral Biometrics}, utilizing touch dynamics (e.g., pressure, velocity) for user verification~\cite{zaidi2021touch, frank2012touchalytics, alrawili2024comprehensivesurveybiometricuser, Feng2012ContinuousMA, Kroeze2016UserAB, Shen2024IncreAuthIB}. While recent works extend these principles to mouse or game dynamics~\cite{khan2024mouse,pao2010game} and address robustness against replay or robotic attacks~\cite{ForgResTouchAuth, zaidi2021touch, serwadda2016toward, 9893211}, a critical gap remains regarding LMM-based agents. Unlike distinctively rigid bots or perfect replay attacks, LMM agents possess stochastic reasoning capabilities yet exhibiting mechanical execution, which current paradigms fail to address systematically.
\section{Discussion \& Future Work}
\label{sec:discussion}

The ``Turing Test on Screen'' serves not merely as a technical benchmark, but as the prelude to a long-term evolutionary arms race between digital platforms and autonomous agents. In this concluding section, we discuss the anticipated trajectories of this conflict from both defensive and offensive perspectives.

\subsection{The Future of Agent Humanization}

To survive within this escalated detection landscape, agent humanization techniques must evolve beyond simple trajectory smoothing. We identify three key directions for future research:

\subsubsection{From Post-Processing to End-to-End Humanization}
The Wrapper approach adopted in this study faces an inherent trade-off between \textbf{Offline Quality} and \textbf{Online Latency}. Retrieving and adapting high-fidelity human trajectories introduces computational overhead. In real-time environments, this latency may cause the agent to miss transient UI events such as a closing popup window, thereby negatively impacting the Task Success Rate.

We posit that humanization should be intrinsic to the model architecture itself. Rather than relying on latency-inducing post-processing, future Large Multimodal Models should be trained or fine-tuned to generate humanized trajectories directly via an end-to-end framework.

\subsubsection{Personalized Humanization}
Detection algorithms may eventually advance to \textbf{Personalized Detection}, verifying not merely whether a user is human, but whether the behavior matches the specific user's historical profile. Consequently, agents must advance towards \textbf{Personalized Humanization}, where the system learns to mimic the unique motor patterns and behavioral habits of its specific user rather than a generic population average.

\subsubsection{Generalized Cross-Modal Humanization} Finally, human interaction is fundamentally multimodal. While our current benchmark prioritizes Touch and Swipe events, future iterations of the AHB should extend their scope to encompass additional modalities. Specifically, this includes \textbf{Typing Dynamics}, which entails simulating keystroke rhythms defined by realistic error rates and inter-key latency variations. Furthermore, it is essential to model \textbf{Scrolling and Reading Behaviors}, where scroll velocity modulates in response to content density rather than maintaining an artificial constant speed.




\subsubsection{AHB as an Evolutionary Compass}
Ultimately, the Agent Humanization Benchmark (AHB) transcends its role as a mere evaluation metric to become a cornerstone of a new \textbf{survival-centric design philosophy} for GUI agents. By quantifying the trade-off between \textbf{Imitability} and \textbf{Utility}, AHB serves as a fitness function that drives a paradigm shift: from the singular pursuit of efficiency to a dual-objective optimization of architectural resilience and behavioral camouflage. 

In the evolving arms race between platforms and user agency, AHB guides the development of \textbf{indistinguishable digital citizens}next-generation agents that possess both the functional power to assist users and the behavioral nuance required to coexist harmoniously in adversarial digital ecosystems.

\subsection{The Future of Agent Detection}

Current detection methodologies predominantly operate at the \textbf{Execution Layer}, scrutinizing the kinematic fidelity of individual actions. However, as humanization strategies approximate motor perfection, the biometric surface between humans and agents will blur. We posit that the adversarial frontier may shift to the \textbf{Intent Layer}.

Consequently, the ultimate form of the Turing Test on Screen will evolve from distinguishing whose hand is moving to determining \textbf{whose brain is thinking.} Future detectors are expected to model behavioral sequences over longer horizons, seeking signs of human curiosity, distraction, and indecision that algorithmic efficiency inherently strives to eliminate.

More discussion on 1. the robustness of detection baselines  2. delineating the scope from motion dynamics to physical sensors 3. the imitability-utility pareto frontier 4. broder impact and ethical consideration  are provided in Appendix~\ref{app:discussion}.

\section{Conclusion}
This paper introduces the ``Turing Test on Screen,'' a novel paradigm evaluating the anti-detection capabilities of autonomous GUI agents through behavioral humanization. By formalizing agent-platform interactions as a MinMax optimization problem, we established the Agent Humanization Benchmark (AHB) and a high-fidelity dataset to quantify the trade-off between imitability and task utility. Our findings demonstrate that while vanilla LMM-based agents are highly detectable due to kinematic anomalies, our proposed humanization strategies significantly enhance behavioral authenticity both theoretically and empirically while maintaining performance.

As the adversarial landscape evolves, we anticipate a paradigm shift in detection from execution-level kinematics to intent-level patterns. The AHB serves as a compass for this transition, guiding the development of agents that move beyond mere functional efficiency toward seamless coexistence within human-centric digital infrastructures. Ultimately, this work lays the foundation for autonomous agents to sustainably safeguard user agency in increasingly adversarial online environments.

\bibliographystyle{unsrt}  
\bibliography{references}  






\appendix

\section{The Conflict Between GUI Agents and App Platforms}
\label{app:doubao_case}

\subsection{Background}
To understand the gravity of this incident, it is essential to contextualize the hardware involved. The conflict centers on devices like the \textbf{Nubia M153} smartphone, which features a deep integration of ByteDance's \textit{Doubao Mobile Assistant}. 

Unlike traditional voice assistants (e.g., Siri or Google Assistant) that largely rely on official APIs, Doubao functions as a system-level agent. It utilizes the \texttt{INJECT\_EVENTS} permission and Vision-Language Models (VLM) to ``see'' the screen and simulate physical taps. This allows it to execute cross-app workflows without manual input, promising a ``zero-touch'' experience for commands such as:
\begin{quote}
    \textit{``Open WeChat and send a message to my wife saying I'll be home in 20 minutes.''}
\end{quote}
For the end-user, this flattens the friction of navigating multiple apps. For app developers, however, this unauthorized ``driving'' of their interface represents a significant security gray area.

\begin{figure*}[htbp]
  \begin{center}
    \centerline{\includegraphics[width=0.95\textwidth]{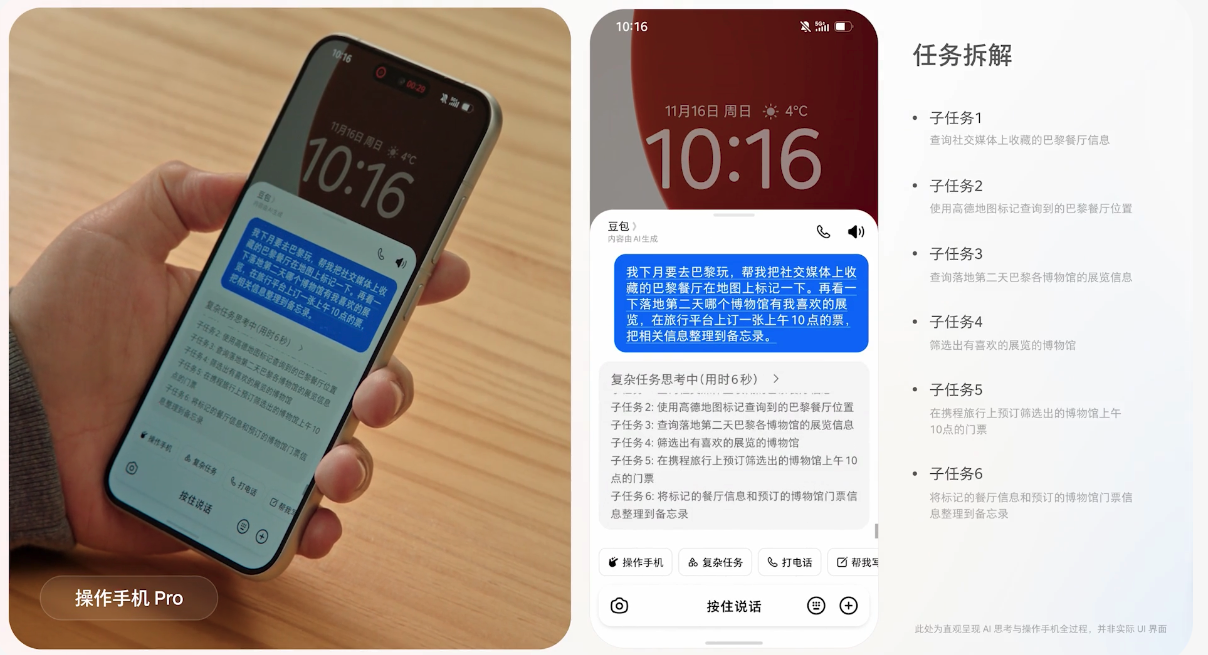}}
    \caption{Doubao Mobile Assistant Working Scene on the Offical Website.}
    \label{fig:fake action interval}
  \end{center}
\end{figure*}

\subsection{The Incident}
In late 2025, the theoretical tension between GUI agents and app platforms escalated into operational failures.\footnote{https://opentools.ai/news/wechat-and-chinese-banking-apps-check-bytedances-doubao-mobile-assistant-amid-privacy-and-security-concerns}

\subsubsection{The Automation Scenario}
The Doubao agent allows users to automate complex interaction chains via voice. For example, instead of manually opening WeChat, finding a contact, and typing, the user issues a voice command. The AI then visibly takes over the screen, opening windows and simulating clicks to execute the task. 

While ByteDance explicitly states that the assistant \textbf{does not perform sensitive operations like payments or identity verification}, the agent still requires deep access to the app's GUI to function for standard tasks like messaging or searching.

\subsubsection{The Security Backlash}
Upon release, users of the Nubia M153 immediately encountered service denials, including forced logouts from WeChat and security warnings from banking applications like the \textit{Agricultural Bank of China}. The conflict was driven by the assistant's reliance on the \texttt{INJECT\_EVENTS} permission, a high-privilege capability that allows software to programmatically generate touch inputs and keystrokes. Consequently, the apps' automated risk control measures interpreted the AI's external control as unauthorized manipulation, triggering defensive protocols designed to prevent account hijacking and fraud.

\subsection{Technical Mechanism of Conflict}
The conflict stems from a fundamental incompatibility between \textbf{agentic automation} and \textbf{platform protocols}. Since app platforms like WeChat do not provide open APIs for third-party control, agents must resort to screen-driven techniques, treating the application simply as a graphical user interface to be navigated visually. However, this approach directly clashes with the platforms' defensive architectures, which employ sophisticated anti-bot algorithms to prevent bulk-spamming and fraud. From the platform's perspective, the agent's interaction patterns were indistinguishable from malicious scripts; consequently, the system prioritized security, defaulting to blocking any non-human input to prevent potential data breaches.

\subsection{Arguments and Implications}
The standoff highlights the divergent incentives of the two parties:

\begin{enumerate}[leftmargin=15pt]
    \item \textbf{The OS/Agent Provider (ByteDance/Nubia):} They argue for \textit{User Agency} and \textit{Innovation}. They contend that since the user explicitly authorized the assistant, the AI acts as a legitimate digital proxy for human intent. ByteDance further emphasized that their tool adheres to privacy standards and deliberately avoids sensitive operations like financial transactions.
    
    \item \textbf{The Super-Platform (Tencent/Banks):} They cite \textit{Security and Ecosystem Integrity}. Reports indicate that WeChat's restrictions were not specifically targeted at Doubao but were unintentional triggers of existing risk control measures. They implies that allowing external programs to drive the apps bypasses critical security checks, creating a vulnerability that could be exploited by malicious actors if such automation becomes normalized.
\end{enumerate}

\subsection{Outcome}
The \textbf{Doubao Incident} serves as a critical case study for the GUI Agent industry. It demonstrates that permissionless UI-based automation remains a fragile operational mode. Without formal API agreements~\cite{yang2025surveyaiagentprotocols} or standardized security protocols, AI agents attempting to navigate walled gardens will inevitably collide with the defensive countermeasures of established software ecosystems.

\section{Feature Extraction and Statistical Analysis}
\label{sec:feature_extraction}

\begin{table*}[h]
    \centering
    \caption{Task Clusters and Applications}
    \renewcommand{\arraystretch}{0.9}
    \begin{tabular}{lll}
        \toprule
        \textbf{Cluster ID} & \textbf{Category} & \textbf{Applications} \\
        \midrule
        0 & Social Media & Toutiao, Weibo, Xiaohongshu, Zhihu \\
         1 & Shopping & JD, Taobao, Cainiao, Meituan, Eleme \\
         2 & Video Streaming & iQIYI, Bilibili, QQ Music \\
         3 & Trip Planning & Ctrip, Amap (Gaode), Umetrip, Qunar \\
         4 & Office \& Learning & Tencent Docs, Tencent Meeting, Youdao Dictionary, Haodafu \\
        \bottomrule
    \end{tabular}
    \label{tab:clusters and applications}
\end{table*}

In this section, we delineate the methodology for extracting discriminative behavioral features from the raw event streams $\mathcal{E}$ defined in Section~\ref{sec:problem_formulation}. Following our adversarial framework, the fundamental unit of behavioral analysis is the \textbf{action} $a_t$. Each action, whether performed by a human or a GUI agent, triggers a corresponding event set $E_t$.

Mathematically, each action is characterized by its triggered signature: $E_t = (M_t, S_t)$, where $M_t$ and $S_t$ denote the synchronized \textit{MotionEvents} and \textit{SensorEvents}, respectively. The \textit{MotionEvent} component $M_t$ consists of a series of \textit{FingerEvents} $\{f_n\}_{n=1}^N$, where each $f_n = (x_n, y_n, t_n)$ encodes the spatial coordinates and timestamp at sample $n$. Concurrently, the \textit{SensorEvent} component $S_t$ captures high-dimensional multimodal data generated during the execution of $a_t$:
\begin{equation}
    S_t = \{Acc, Gyro, RotVec, Grav, LinAcc, Mag, Light, Prox\}
\end{equation}
where these elements correspond to the Accelerometer, Gyroscope, and other hardware sensors. Intervals between consecutive actions represent periods of inactivity during state transitions $s_t \to s_{t+1}$.

\subsection{Dataset Collection}

Based on the definition of $a_t$, we construct a large-scale, multi-modal dataset to evaluate the discriminative power of the Detector $D_{\Theta}$. Table~\ref{tab:dataset_composition} and Table~\ref{tab:dataset_composition_humanized_agents} provides a detailed breakdown of the dataset, which spans two primary operator distributions:
\begin{itemize}[leftmargin=10pt]
    \item \textbf{Human Operators ($\mathcal{H}$):} Data collected from four distinct sub-populations (Young Man, Young Woman, Middle-aged, and Elderly) to account for physiological variances in human actions.
    \item \textbf{GUI Agents ($G_{\Phi}$):} Action sequences recorded from state-of-the-art models including UI-TARS, MobileAgent-E (GPT-4o/Claude-3.5-Sonnet), AgentCPM and AutoGLM.
\end{itemize}

All experiments are conducted on the same device—a \textbf{Xiaomi Mi Max 2} running MIUI 11.0.2.0—to ensure consistency and comparability. Data are collected online, verbatim from the phone, while humanization methods are applied in real time during agent actions rather than post hoc, ensuring accurate assessment of their effects.

For humanized agents without fake actions, tap durations are elongated, and swipes are rendered realistic via data-driven trajectory matching. As shown in Table~\ref{tab:main table}, some agents undergo only swipe humanization, leaving tap durations unmodified. For agents with fake actions, the applicable humanization techniques (tap elongation and/or swipe humanization) are augmented with small circular gestures (radius: 50 px) emitted from the last tap location according to a Poisson process with rate $\lambda = 0.9$ Hz.

Sensor data recorded alongside interactions include: \textbf{Accelerometer} (proper acceleration, $m/s^2$), \textbf{Gyroscope} (angular velocity, rad/s), \textbf{Magnetic Field} (geomagnetic field, $\mu$T), \textbf{Gravity} (estimated gravitational acceleration, derived from accelerometer), \textbf{Linear Acceleration} (acceleration excluding gravity, obtained by subtracting Gravity from Accelerometer), \textbf{Rotation Vector} (fused orientation from Accelerometer, Gyroscope, and Magnetic Field), \textbf{Light} (ambient illuminance in lux), and \textbf{Proximity} (nearby object detection). The latter two are hardware sensors; the rest include both physical and virtual (software-fused) types. Visualization of sensor changes across time axis is shown in Figure~\ref{fig:Sensor Gestures Visulization}.

For more details, see the official documentation:  
\href{https://developer.android.com/develop/sensors-and-location/sensors/sensors_overview}{Android Sensor Overview} and  
\href{https://source.android.com/docs/core/interaction/sensors/sensor-types}{Android Sensor Types}.


\begin{figure}[ht]
  \begin{center}
    \centerline{\includegraphics[width=0.9\textwidth]{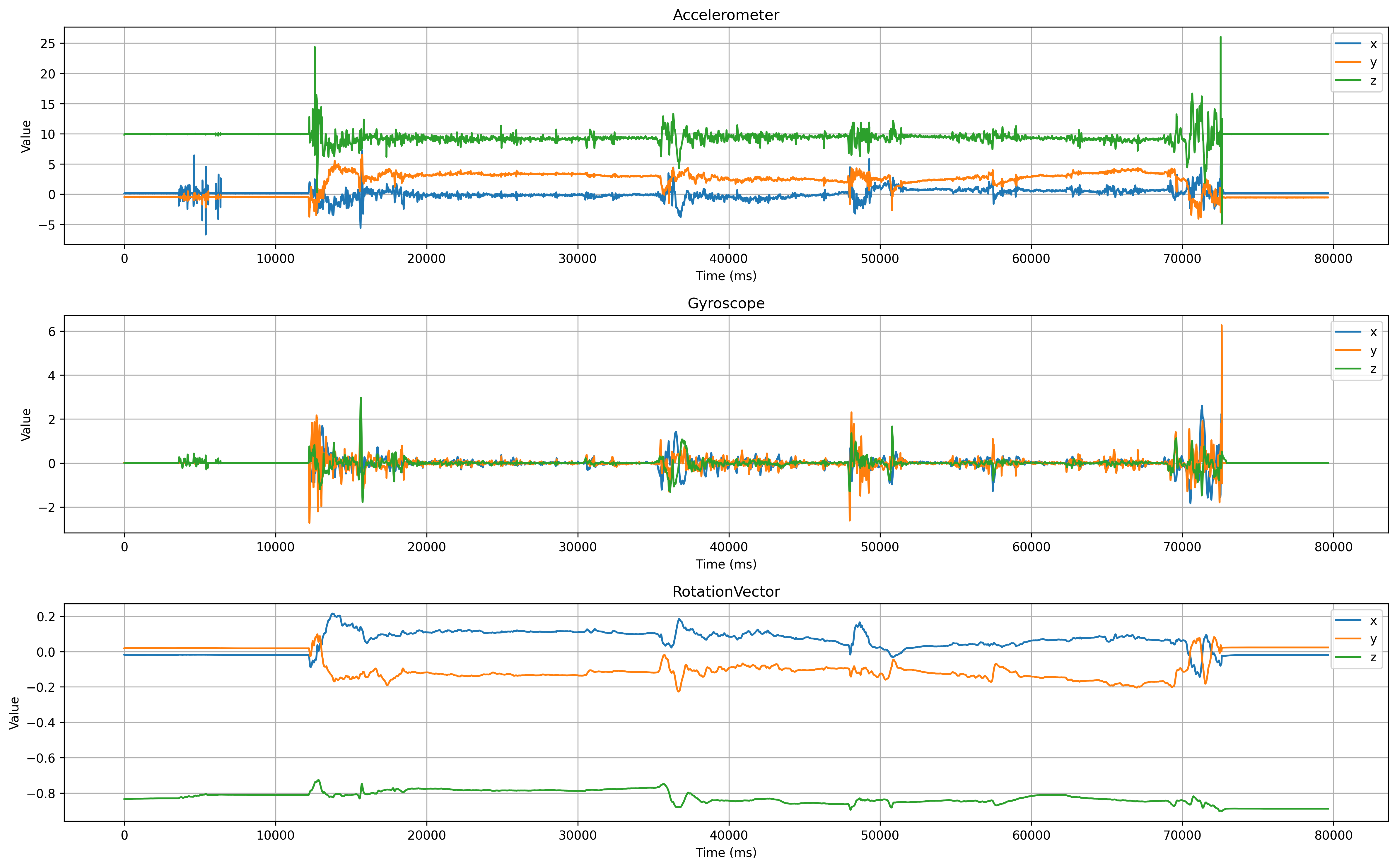}}
    \caption{Sensor Events Visulization as Time Changes.}
    \vspace{-10pt}
    \label{fig:Sensor Gestures Visulization}
  \end{center}
\end{figure}

However, it is worth noting that while our dataset encompasses both event types, achieving high-fidelity humanization of \textit{SensorEvents} poses significant technical challenges. In realistic deployment scenarios, such as when a mobile device is placed stationary on a flat surface, the intricate fluctuations of sensors like the Gyroscope and Magnetic Field are inherently difficult for GUI agents to simulate authentically. The only viable path for an agent to simulate such signals would require system-level API interventions to inject synthetic sensor values. Consequently, this study intentionally constrains its primary focus to the humanization of \textit{MotionEvents}, treating the investigation of sensor-level adversarial simulation as a secondary objective to be addressed in future work.
\subsection{The Definition of Tap and Swipe Actions}

We categorize each action $a_t$ as either a \textit{tap} or a \textit{swipe} based on the number of its \textit{FingerEvents} within $M_t$. Specifically, an action is defined as a tap if $|f| < 5$ and a swipe if $|f| \ge 5$. We visualize every action from the perspective of length and duration. As is shown in Figure~\ref{fig:tap/swipe definition}, the duration of an action is nearly proportional to its length, since—given the smartphone’s constant MotionEvent sampling rate and the fact that finger movement during swipes continuously generates new events—the effective MotionEvent generation rate remains roughly constant. Moreover, we can see many actions with length $<5$ clustered in the bottom-left corner of the figure; these are classified as taps, consistent with the observation that taps involve little to no finger movement between down and up events.

\begin{figure}[ht]
    \begin{center}
        \centerline{\includegraphics[width=0.9\textwidth]{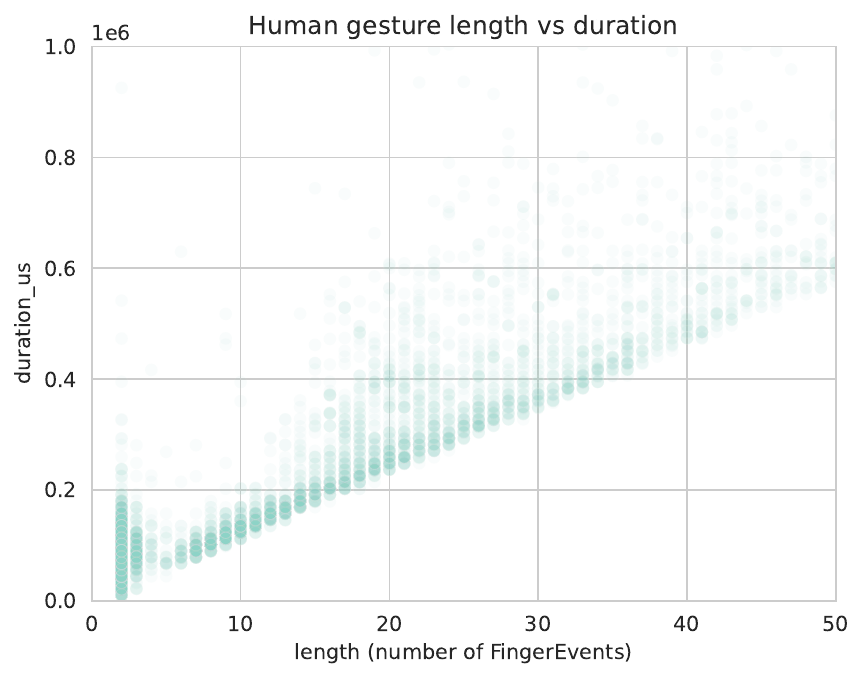}}
        \caption{The Lengths and Durations of Each Action. Actions with $|f| < 5$ are considered taps.}
        \label{fig:tap/swipe definition}
    \end{center}
\end{figure}

\begin{table*}[t]
\centering
\caption{Dataset composition of humans and raw agents across motion and sensor events.}
\label{tab:dataset_composition}
\renewcommand{\arraystretch}{1.2}
\setlength{\tabcolsep}{3pt} 
\resizebox{0.95\textwidth}{!}{%
\begin{tabular}{c|c|cc|cccccccc|c}
\toprule
\multicolumn{2}{c|}{Category} & \multicolumn{2}{c|}{Motion Event} & \multicolumn{8}{c|}{Sensor Event} & \multirow{2}{*}{Overall} \\ 
\cmidrule(lr){1-12}
Type & Sub-population & Tap & Swipe & Accel. & Gyro. & RotVec. & Grav. & LinAcc. & Mag. & Light & Prox. & \\
\midrule
\multirow{4}{*}{Human} 
& Young Man     & 5050 & 2186 & 4516935 & 4516491 & 4482116 & 4516352 & 4516349 & 1101702 & 5300 & 566 & \multirow{4}{*}{37,768,698} \\ 
& Young Woman   & 1706 & 786  & 1300213 & 1300038 & 1279412 & 1300018 & 1300012 & 314457  & 2544 & 182 & \\
& Middle-aged   & 782  & 208  & 743043  & 742953  & 742922  & 742953  & 742952  & 182605  & 1314 & 86  & \\
& Elderly       & 644  & 161  & 651490  & 651402  & 641091  & 651399  & 651396  & 157573  & 1225 & 84  & \\
\midrule
\multirow{5}{*}{Agent} 
& UI-TARS                          & 772  & 380  & 2,991,281 & 2,991,215 & 2,971,001 & 2,991,077 & 2,991,074 & 730,144  & 579  & 212 & \multirow{5}{*}{243,090,929} \\ 
& Mobile-Agent-E (GPT-4o)          & 832  & 148  & 15,392,992& 15,392,962& 15,392,742& 15,392,802& 15,392,802& 3,782,534 & 2,999 & 222 & \\
& Mobile-Agent-E (Claude-Sonnet)  & 849  & 141  & 16,165,856& 16,165,847& 15,655,597& 16,165,637& 16,165,636& 3,846,893 & 1,185 & 212 & \\
& AgentCPM                        & 2,400 & 166  & 3,343,441 & 3,343,425 & 3,343,258 & 3,343,303 & 3,343,305 & 821,607  & 472  & 175 & \\
& AutoGLM                          & 1,597 & 339  & 8,577,328 & 8,577,289 & 8,550,015 & 8,577,154 & 8,577,152 & 2,101,041 & 619  & 220 & \\
\bottomrule
\end{tabular}
}
\end{table*}

\begin{table*}[t]
\centering
\caption{Dataset composition of humanized agents with or without fake actions across motion and sensor events.}
\label{tab:dataset_composition_humanized_agents}
\renewcommand{\arraystretch}{1.2}
\setlength{\tabcolsep}{3pt} 
\resizebox{0.95\textwidth}{!}{%
\begin{tabular}{c|c|cc|cccccccc|c}
\toprule
\multicolumn{2}{c|}{Category} & \multicolumn{2}{c|}{Motion Event} & \multicolumn{8}{c|}{Sensor Event} & \multirow{2}{*}{Overall} \\ 
\cmidrule(lr){1-12}
Type & Sub-population & Tap & Swipe & Accel. & Gyro. & RotVec. & Grav. & LinAcc. & Mag. & Light & Prox. & \\
\midrule
\multirow{5}{*}{\shortstack{Humanized \\Agent\\(w/o fake\\ action)}} 
& UI-TARS                          & 1677 & 683  & 6357826 & 6357750 & 6318637 & 6357479 & 6357484 & 1552891 & 761  & 388 & \multirow{5}{*}{280,320,486} \\ 
& Mobile-Agent-E (GPT-4o)          & 1382 & 269  & 22018165& 22018033& 21964700& 22017779& 22017784& 5397465 & 1354 & 401 & \\
& Mobile-Agent-E (Claude-Sonnet)  & 675  & 46   & 12942360& 12942301& 12141290& 12942167& 12942158& 2983499 & 2394 & 215 & \\
& AgentCPM                        & 1954 & 286  & 3583409 & 3583366 & 3583216 & 3583246 & 3583252 & 880616  & 420  & 174 & \\
& AutoGLM                          & 1797 & 348  & 8752376 & 8752336 & 8724506 & 8752198 & 8752196 & 2143920 & 632  & 225 & \\
\midrule
\multirow{3}{*}{\shortstack{Humanized \\Agent\\(with fake\\ action)}}
& UI-TARS                          & 735  & 6001 & 3213798 & 3213773 & 3200123 & 3213647 & 3213644 & 786520  & 2833 & 185 & \multirow{3}{*}{154,704,094} \\ 
& Mobile-Agent-E (GPT-4o)          & 601  & 33187& 14346123& 14346063& 14345898& 14345959& 14345962& 3525565 & 425  & 178 & \\
& AutoGLM                          & 2201 & 21521& 11921876& 11921824& 11921638& 11921682& 11921681& 2929341 & 888  & 222 & \\
\bottomrule
\end{tabular}
}
\end{table*}

\begin{table*}[ht]
    \centering
    \small 
    \caption{Touch Dynamics Feature Descriptions with Information Gain (IG)}
    \label{tab:touch_features}
    \renewcommand{\arraystretch}{1.2} 
    \begin{tabular}{l l c p{6cm}}
        \toprule
        \textbf{Variable Name} & \textbf{Full Description} & \textbf{Information Gain} & \textbf{Explanation} \\
        \midrule
        
        \multicolumn{4}{l}{\textit{\textbf{Velocity (Kinematics)}}} \\
        v20 & 20\%-perc. pairwise velocity & 0.4487 & Speed at lower 20th percentile (slow phase). \\
        v50 & 50\%-perc. pairwise velocity & 0.4491 & Median velocity (typical speed). \\
        v80 & 80\%-perc. pairwise velocity & 0.4750 & Speed at 80th percentile (peak speed). \\
        speed & average velocity & 0.4238 & Total distance divided by total duration. \\
        v\_last3\_median & median velocity at last 3 pts & 0.4796 & Deceleration behavior near target. \\
        \midrule
        
        \multicolumn{4}{l}{\textit{\textbf{Acceleration (Kinematics)}}} \\
        a20 & 20\%-perc. pairwise acc & 0.4555 & Acceleration at lower 20th percentile. \\
        a50 & 50\%-perc. pairwise acc & 0.3583 & Median acceleration (smoothness). \\
        a80 & 80\%-perc. pairwise acc & 0.3616 & Acceleration at 80th percentile (jerks). \\
        acc\_first5pct\_median & median acc. at first 5 pts & 0.4197 & Initial impulse force at touch-down. \\
        \midrule
        
        \multicolumn{4}{l}{\textit{\textbf{Deviation \& Linearity}}} \\
        dev20 & 20\%-perc. dev. from line & 0.5406 & Lower bound deviation from straight line. \\
        dev50 & 50\%-perc. dev. from line & 0.5911 & Median deviation (general straightness). \\
        dev80 & 80\%-perc. dev. from line & 0.5084 & Major deviation (curves). \\
        maxDev & largest dev. from line & 0.6726 & Furthest point from ideal line (max arc). \\
        \midrule
        
        \multicolumn{4}{l}{\textit{\textbf{Geometry \& Spatial}}} \\
        length & length of trajectory & 0.4956 & Actual path length traveled. \\
        displacement & direct end-to-end distance & 0.5178 & Linear distance between start and end. \\
        ratio\_end\_to\_length & ratio end-to-end dist / length & 0.6581 & Path efficiency (1 = perfect straight line). \\
        meanResultantLength & mean resultant length & 0.6581 & Directional consistency statistic. \\
        direction & direction of end-to-end line & 0.6052 & Global angle of the action. \\
        avgDirection & average direction & 0.6107 & Average of local angles between points. \\
        \midrule
        
        \multicolumn{4}{l}{\textit{\textbf{Coordinates \& Time}}} \\
        startX / startY & start x / start y & 0.5167 / 0.4036 & Touch-down coordinates. \\
        endX / endY & stop x / stop y & 0.5160 / 0.3975 & Lift-off coordinates. \\
        duration & action duration & 0.5831 & Total time in milliseconds. \\
        \bottomrule
    \end{tabular}
\end{table*}

\begin{figure*}[ht]
  \begin{center}
    \centerline{\includegraphics[width=0.95\textwidth]{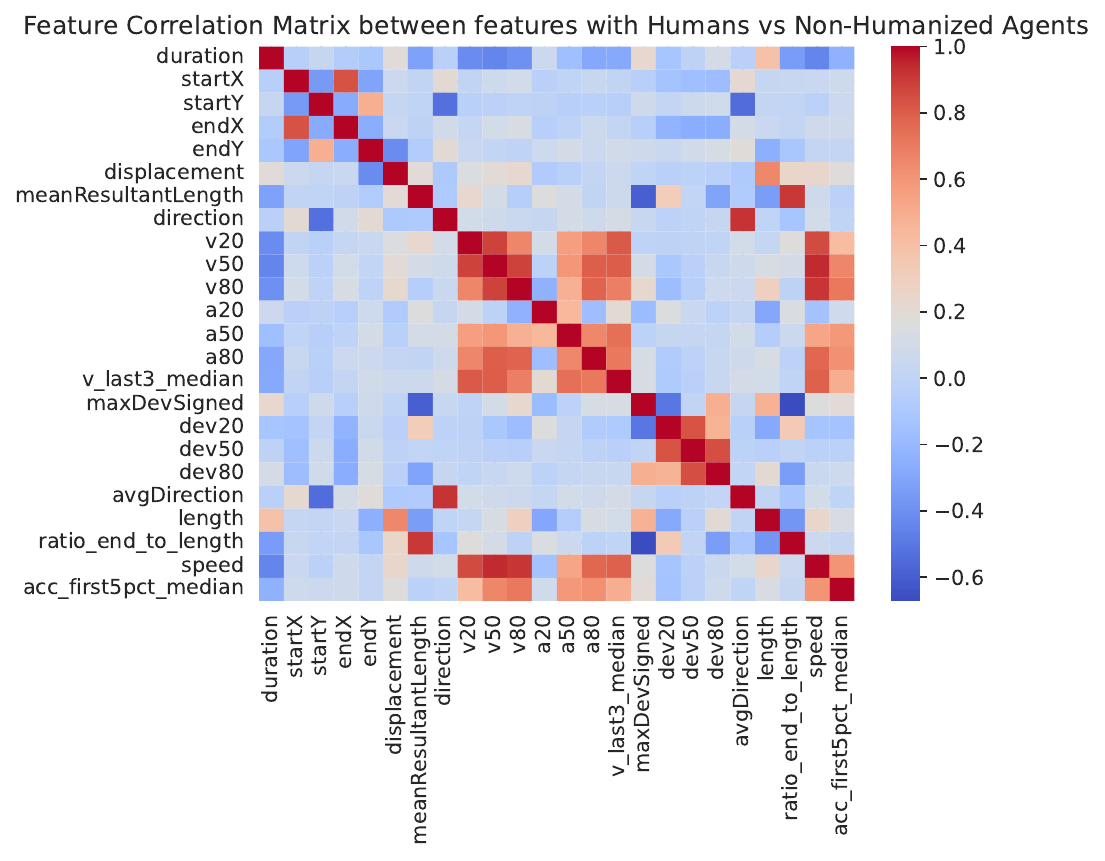}}
    \caption{The correlation of these 24 features. Red color means stronger correlation.}
    \label{fig:touch_features}
  \end{center}
\end{figure*}

\subsection{24 statistical features}

From these raw sequences, we derive 24 statistical features detailed in Table~\ref{tab:touch_features}, selected to capture specific biomechanical and behavioral characteristics. The rationale for including the absolute spatial coordinates of the trajectory's start and end points stems from the observation that users exhibit unique spatial preferences, frequently interacting with specific areas on the screen regardless of the underlying interface layout.

We utilize kinematic features to capture dynamic motion control habits. Specifically, by calculating\textbf{ the average velocity of the last five points of the trajectory}, we can distinguish between two different rolling behaviors: static release, where the user stops moving before lifting their finger and ballistic release, where the user maintains a certain lateral velocity when lifting their finger, resulting in inertial rolling. This distinction is typically associated with a unique throwing velocity inherent to a particular user.

To comprehensively quantify the geometric linearity and curvature of the action, we utilize the \textbf{mean resultant length} to measure how directed the action is. Specifically, all $N$ consecutive coordinate pairs $(x_n, y_n), (x_{n+1}, y_{n+1})$ along the path define an ensemble of $N - 1$ unit direction vectors $z_n = \exp(i\phi_n)$ with angles $\phi_n$. The mean resultant length $R$ of this ensemble is formally characterized by:
\begin{equation}
    R = (N - 1)^{-1}\left|\sum_{n=1}^{N-1} z_n\right|
\end{equation}
This metric scales between 1 and 0, indicating a perfectly straight line and uniformly random angles, respectively, and provide a robust measure of angular dispersion. Associated with this ensemble is the \textbf{mean direction}, defined as $\arg((N - 1)^{-1}\sum_{n=1}^{N-1} z_n)$.

In addition, we calculated\textbf{ the path efficiency ratio}, defined as the Euclidean distance between the endpoints divided by the total trajectory length, and the \textbf{maximum signed perpendicular deviation of the trajectory} relative to the ideal line connecting the endpoints. The signed deviation is particularly useful for identifying the convexity of action arcs, which can serve as a potential indicator of the user's dominant hand.

Temporal dynamics provide further behavioral resolution. Features such as\textbf{ action duration and inter-action latency} act as proxies for cognitive processing and reading speed, distinguishing between users and agents who employ slow, continuous scrolling and those who execute rapid, discrete page shifts.

\subsection{Information Gain}
\label{sec:feature_informativeness}

To assess the utility of the extracted feature set in distinguishing between authentic human inputs and agent operations, we analyze the information gain provided by each individual attribute. Consistent with findings in behavioral biometrics, not all features contribute equally; therefore, we quantify a measure of \textbf{informativeness} ($I_F$) for each feature $F$ to reveal the hierarchy of feature relevance.

We define this measure as the relative mutual information between the feature variable $F$ and the source identity variable $U$ (representing the class: Human or Agent). Formally, $I_F$ is calculated as the ratio of the mutual information $I(F;U)$ to the entropy of the source identity $H(U)$:
\begin{equation}
    I_F := \frac{I(F;U)}{H(U)} = \frac{H(U) - H(U|F)}{H(U)} = 1 - \frac{H(U|F)}{H(U)}
\end{equation}
Here, $H(U)$ and $H(U|F)$ denote the marginal entropy of the source identity and the conditional entropy of the source identity given feature $F$, respectively. This normalized metric yields a value between 0 and 1, where 0 indicates that the feature carries no discriminatory information, and 1 implies that the feature strictly determines the source identity.

Based on the quantitative analysis in Table \ref{tab:touch_features}, geometric features (e.g., \textbf{maxDev}, $IG \approx 0.66$) demonstrate the highest discriminative power, effectively capturing the contrast between natural human motor noise and agent linearity. In contrast, acceleration statistics and absolute coordinates exhibit low informativeness ($IG < 0.40$) due to their susceptibility to sensor noise and UI layout dependency. 

Theoretically, features with higher mutual information are more beneficial for distinguishing humans from agents. However, robust detection is not achieved by individual metrics alone; rather, one can gain more information by combining features that complement each other to capture the full behavioral patterns.

To visualize potential redundancy within the feature space, Figure~\ref{fig:touch_features} presents a heatmap of pairwise correlation coefficients. The color intensity reflects the magnitude of the correlation: darker red denotes strong positive relationships, darker blue indicates strong negative associations, and white signifies statistical independence. While this matrix reveals distinct clusters of highly correlated attributes, we do not exclude features based solely on these metrics. It is well-established that correlated variables can still contribute unique, complementary information when combined in high-dimensional classification models. Consequently, we retain the complete feature set for our SVM and XGBoost detectors. The primary utility of this visualization is to empirically verify specific behavioral hypotheses, such as the consistent positive synchronization observed across the 20\%, 50\%, and 80\% velocity percentiles.

\section{Theoretical Results}
\label{app:theory}

In this section, we provide a bunch of theoretical theorem to prove the effectiveness of heuristic Noise Injection and data-driven history matching

Let $(\mathcal{X}, \mathcal{F})$ be the measurable space of behavioral trajectories. Let $P \in \mathcal{P}(\mathcal{X})$ denote the true distribution of Human behaviors, and $G_\Phi \in \mathcal{P}(\mathcal{X})$ denote the distribution of the Raw Agent parameterized by $\Phi$. The Humanization Wrapper is modeled as a Markov transition kernel $K$, yielding the humanized agent distribution $G'_\Phi = G_\Phi K$.

We posit two fundamental assumptions. First, we assume the human distribution $P$ is absolutely continuous, whereas the Raw Agent distribution $G_\Phi$ lies on a lower-dimensional manifold, implying they are mutually singular ($P \perp G_\Phi$). Second, we restrict the detector $D_\Theta$ to the class $\mathcal{D}_L$ of $L$-Lipschitz continuous functions. This is a standard assumption for neural networks with bounded weights.

The detector aims to maximize the standard cross-entropy objective:
\begin{equation}
\label{eq:V}
\mathcal{L}_D(D_\Theta; P, G'_\Phi) = \mathbb{E}_{x \sim P}[\log D_\Theta(x)] + \mathbb{E}_{x \sim G'_\Phi}[\log(1 - D_\Theta(x))].
\end{equation}

The following theorem restates the classical result regarding the detector's optimal performance.

\begin{theorem}
\label{thm:opt_detector}
For any fixed agent policy $G'_\Phi$, the maximum discrimination capability of the detector is bounded by the Jensen--Shannon divergence:
\begin{equation}
    \sup_{D_\Theta} \mathcal{L}_D(D_\Theta; P, G'_\Phi) = -\log 4 + 2 \cdot \JS(P \parallel G'_\Phi).
\end{equation}
\end{theorem}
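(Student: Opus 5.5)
This is the standard Goodfellow et al.\ optimal-discriminator argument, and I will follow it closely. The supremum is taken over all measurable $D:\mathcal{X}\to[0,1]$. The $L$-Lipschitz restriction $\mathcal{D}_L$ can only lower the supremum, so for the restricted class the identity becomes the inequality ``$\le$'', which is the sense of ``bounded by'' in the statement.

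\textbf{Step 1: a common density.} Let $\mu = \tfrac12(P+G'_\Phi)$. Both $P$ and $G'_\Phi$ are absolutely continuous with respect to $\mu$, so the Radon--Nikodym derivatives $p = dP/d\mu$ and $g = dG'_\Phi/d\mu$ exist, and $p+g=2$ $\mu$-a.e. Working with densities relative to $\mu$ rather than Lebesgue measure matters here. The assumption $P\perp G_\Phi$ means $G_\Phi$ need not have a Lebesgue density, and $G'_\Phi$ may still be singular if $K$ does not smooth it. Choosing $\mu$ as the dominating measure handles this case at no cost. With these densities,
\begin{equation}
\mathcal{L}_D(D;P,G'_\Phi)=\int_{\mathcal{X}} \bigl[p(x)\log D(x) + g(x)\log(1-D(x))\bigr]\,d\mu(x).
\end{equation}

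\textbf{Step 2: pointwise maximization.} For fixed $a,b\ge 0$ with $a+b>0$, the function $y\mapsto a\log y + b\log(1-y)$ on $[0,1]$ is concave. It is maximized at $y^*=a/(a+b)$, with the conventions $0\log 0=0$ and value $-\infty$ if a positive weight meets $\log 0$. So the integrand is maximized pointwise by
\begin{equation}
D^*(x)=\frac{p(x)}{p(x)+g(x)},
\end{equation}
which is measurable. Any other $D$ gives a pointwise smaller integrand, hence a smaller (possibly $-\infty$) integral. The supremum is therefore attained at $D^*$. This interchange of sup and integral is the only step needing care: the integrand is bounded above by $0$, and the value at $D^*$ is finite (see Step 3), so monotonicity of the integral suffices.

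\textbf{Step 3: identify the value.} Substitute $D^*$, and write $M=\tfrac12(P+G'_\Phi)$, whose density is $\tfrac12(p+g)$. Then
\begin{align}
\mathcal{L}_D(D^*) &= \mathbb{E}_P\Bigl[\log\tfrac{p}{(p+g)/2}\Bigr] + \mathbb{E}_{G'_\Phi}\Bigl[\log\tfrac{g}{(p+g)/2}\Bigr] - 2\log 2 \\
&= \KL(P\parallel M)+\KL(G'_\Phi\parallel M) - \log 4 = -\log 4 + 2\,\JS(P\parallel G'_\Phi).
\end{align}
Both KL terms are at most $\log 2$, because $p/(p+g)\le 1$. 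So the value is finite, which justifies Step 2.

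I expect the main obstacle to be interpretive rather than computational. The Lipschitz class $\mathcal{D}_L$ cannot represent $D^*$ when $P$ and $G'_\Phi$ are singular, since $D^*$ is then an indicator function. My plan is to state the equality over measurable detectors, and to note that for $D_\Theta\in\mathcal{D}_L$ one gets $\sup_{\mathcal{D}_L}\mathcal{L}_D\le -\log 4+2\JS(P\parallel G'_\Phi)$. That inequality is the form used downstream to motivate reducing $\JS$ via the kernel $K$.
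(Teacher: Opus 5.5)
Your proof is correct and follows the paper's argument: you maximize the integrand pointwise to get $D^*=p/(p+g)$, then substitute to obtain $\KL(P\parallel M)+\KL(G'_\Phi\parallel M)-\log 4$. Taking densities with respect to $\mu=\tfrac12(P+G'_\Phi)$ instead of Lebesgue measure, and stating only the inequality $\le$ over $\mathcal{D}_L$, are genuine tightenings of the paper's sketch, which writes Lebesgue densities and so is silent on the singular case that its own assumption $P\perp G_\Phi$ creates. One small correction: your remark that $\mathcal{D}_L$ ``cannot represent'' $D^*$ whenever the distributions are singular is too strong in general, since an $L$-Lipschitz $D$ can match $D^*$ $\mu$-a.e.\ when the supports are at distance at least $1/L$.
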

\begin{proof}
Following the derivation in \cite{goodfellow2014generative}, consider the integrand $f(y) = p(x) \log y + q(x) \log(1-y)$ for a fixed $x$. Setting $\frac{\partial f}{\partial y} = 0$ yields the optimal discriminator $D^*(x) = \frac{p(x)}{p(x)+q(x)}$. Substituting $D^*$ back into Eq.~\eqref{eq:V}:
\begin{align*}
    \sup_{D} \mathcal{L}_D &= \int p(x) \log \frac{p(x)}{p(x)+q(x)} dx + \int q(x) \log \frac{q(x)}{p(x)+q(x)} dx \\
    &= \int p(x) \log \frac{p(x)}{\frac{p(x)+q(x)}{2}} dx + \int q(x) \log \frac{q(x)}{\frac{p(x)+q(x)}{2}} dx - 2\log 2 \\
    &= \KL(P \parallel M) + \KL(G'_\Phi \parallel M) - \log 4 = -\log 4 + 2 \cdot \JS(P \parallel G'_\Phi),
\end{align*}
where $M = (P+G'_\Phi)/2$.
\end{proof}

We now analyze our first strategy, which injects variance via randomized smoothing.

\begin{theorem}
\label{thm:smoothing}
Let $K_\sigma$ be a strictly positive smoothing kernel and $G'_\Phi = G_\Phi * K_\sigma$. Then the smoothed agent strictly reduces the maximum theoretical detectability:
\begin{equation}
    \JS(P \parallel G'_\Phi) < \JS(P \parallel G_\Phi) = \log 2.
\end{equation}
\end{theorem}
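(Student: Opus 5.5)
The plan splits the statement into its two parts: the equality $\JS(P\parallel G_\Phi)=\log 2$ and the strict inequality for the smoothed agent $G'_\Phi = G_\Phi * K_\sigma$.

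\textbf{The equality.} I would use the standing assumption $P \perp G_\Phi$. Mutual singularity gives a measurable set $A$ with $P(A)=1$ and $G_\Phi(A)=0$. On $A$ the density of $P$ with respect to $M=(P+G_\Phi)/2$ equals $2$, and on $A^c$ the density of $G_\Phi$ with respect to $M$ equals $2$. Hence $\KL(P\parallel M)=\KL(G_\Phi\parallel M)=\log 2$, and so $\JS(P\parallel G_\Phi)=\log 2$. By Theorem~\ref{thm:opt_detector}, this is exactly the case $\sup_D \mathcal{L}_D = 0$, i.e.\ perfect detection.

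\textbf{The strict inequality.} I would first recall the general fact that $\JS(P\parallel Q)\le \log 2$, with equality if and only if $P\perp Q$. This follows from the identity
\begin{equation*}
\JS(P\parallel Q)=\log 2-\tfrac12\int (p+q)\,h\!\left(\tfrac{p}{p+q}\right)d\mu ,
\end{equation*}
where $h(t)=-t\log t-(1-t)\log(1-t)\ge 0$ is the binary entropy function. Since $h$ vanishes only at $t\in\{0,1\}$, the integral is zero exactly when $pq=0$ $\mu$-a.e., which is mutual singularity. It therefore suffices to show that $P$ and $G'_\Phi$ are \emph{not} mutually singular, i.e.\ that $\int \min(p,g')\,d\lambda>0$.

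\textbf{Key step: $G'_\Phi$ has a strictly positive density.} Work in $\mathcal{X}\subseteq\mathbb{R}^d$ with Lebesgue measure $\lambda$, and interpret ``strictly positive smoothing kernel'' as a density $k_\sigma>0$ everywhere. Then $G'_\Phi$ has density
\begin{equation*}
g'(x)=\int k_\sigma(x-y)\,dG_\Phi(y),
\end{equation*}
which is strictly positive for every $x$ because $G_\Phi$ is a probability measure and the integrand is positive. Since $P$ is absolutely continuous, it has a density $p$ with $\lambda(\{p>0\})>0$. On that set $\min(p,g')>0$, so $\int\min(p,g')\,d\lambda>0$. Hence the integrand $(p+g')\,h\bigl(p/(p+g')\bigr)$ is positive on a set of positive measure, and $\JS(P\parallel G'_\Phi)<\log 2$.

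\textbf{Main obstacle.} The hard part is making the setting rigorous rather than the computation itself. Convolution requires $\mathcal{X}$ to carry additive structure; I would take it to be a Euclidean feature or trajectory space. If $\mathcal{X}$ is more general, I would replace convolution by a Markov kernel $K$ such that $K(y,\cdot)$ has an everywhere-positive density for each $y$, and the same argument goes through. Measurability of $g'$ follows from Tonelli's theorem.

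I would also remark that the Lipschitz restriction on detectors is not needed here. It only strengthens the conclusion, since a supremum over a smaller class of detectors is bounded by the unrestricted one from Theorem~\ref{thm:opt_detector}.
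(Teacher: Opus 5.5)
Your proposal is correct and follows the paper's argument: both proofs rest on the fact that convolving with an everywhere-positive kernel gives $G'_\Phi$ a strictly positive density, so $G'_\Phi$ overlaps with $P$ on $\Supp(P)$, which rules out the maximal value $\log 2$. The difference is only in the last step. You conclude via the binary-entropy identity, which shows $\JS(P\parallel Q)=\log 2$ exactly when $P\perp Q$. The paper instead notes $p/(p+q')<1$ on $\Supp(P)$ and reads the strict inequality off the JS integral, citing strict convexity; your identity makes that step explicit and rigorous.
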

\begin{proof}
For the Raw Agent, $P \perp G_\Phi$ implies disjoint supports. Thus, the JS divergence is maximized:
\[
\JS(P \parallel G_\Phi) = \frac{1}{2} \int_{\Supp(P)} p \log \frac{2p}{p} + \frac{1}{2} \int_{\Supp(G_\Phi)} g \log \frac{2g}{g} = \log 2.
\]
For the smoothed agent, $G'_\Phi$ admits a strictly positive density $q'(x)$ everywhere. On the support of $P$, we strictly have $\frac{p(x)}{p(x)+q'(x)} < 1$. By the strict convexity of the divergence:
\[
\JS(P \parallel G'_\Phi) = \frac{1}{2} \mathbb{E}_{P} \left[ \log \frac{2p(x)}{p(x)+q'(x)} \right] + \frac{1}{2} \mathbb{E}_{G'_\Phi} \left[ \log \frac{2q'(x)}{p(x)+q'(x)} \right] < \log 2.
\]
\end{proof}

Our second strategy, History Matching, aligns retrieved human trajectories with the task. We prove that this approach is asymptotically superior to the Raw Agent.

\begin{theorem}
\label{thm:history_matching}
Let $\Psi: \mathcal{X} \to \mathcal{S}$ be a feature mapping to a compact metric space $\mathcal{S}$. Let $P_\Psi$ be the human distribution and $G_{\Phi, \Psi} = \delta_{\mathbf{0}}$ be the Raw Agent. Let $\hat{P}_N$ be the empirical measure of History Matching. As $N \to \infty$:
\begin{equation}
    \lim_{N \to \infty} \sup_{D \in \mathcal{D}_L} \mathcal{L}_D(D; P_\Psi, \hat{P}_N) = -\log 4 < \sup_{D \in \mathcal{D}_L} \mathcal{L}_D(D; P_\Psi, G_{\Phi, \Psi}).
\end{equation}
\end{theorem}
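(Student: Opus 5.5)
The plan is to sandwich $\sup_{D\in\mathcal{D}_L}\mathcal{L}_D(D;P_\Psi,\hat P_N)$ between $-\log 4$ and $-\log 4$ plus a term that vanishes as $N\to\infty$. Separately, I will exhibit a single Lipschitz detector that beats $-\log 4$ against the raw agent $\delta_{\mathbf 0}$. Theorem~\ref{thm:opt_detector} says that over \emph{all} detectors the value is $-\log 4+2\JS$. However, $\JS(P_\Psi\parallel\hat P_N)=\log 2$ for every $N$, because an empirical measure is singular with respect to a continuous $P_\Psi$. So the unrestricted optimum does not converge, and the whole argument must use the Lipschitz restriction.

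To make $\log D$ and $\log(1-D)$ themselves Lipschitz, I will interpret $\mathcal{D}_L$ as $L$-Lipschitz maps $\mathcal{S}\to[\eta,1-\eta]$ for a fixed $\eta>0$. Bounded weights together with a sigmoid output give exactly this on a compact $\mathcal{S}$, and I will state it explicitly as part of the assumption. Then $x\mapsto\log(1-D(x))$ is $(L/\eta)$-Lipschitz.

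\textbf{Lower bound.} The constant detector $D\equiv\tfrac12$ lies in $\mathcal{D}_L$ and gives value exactly $-\log 4$ against any pair of distributions. Hence both suprema in the statement are at least $-\log 4$.

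\textbf{Upper bound for History Matching.} For any $D\in\mathcal{D}_L$ I will write
\[
\mathcal{L}_D(D;P_\Psi,\hat P_N)=\mathbb{E}_{P_\Psi}\bigl[\log D+\log(1-D)\bigr]+\bigl(\mathbb{E}_{\hat P_N}-\mathbb{E}_{P_\Psi}\bigr)\bigl[\log(1-D)\bigr].
\]
The first term is at most $-\log 4$, since $y(1-y)\le\tfrac14$ pointwise. By Kantorovich--Rubinstein duality, the second term is at most $(L/\eta)\,W_1(\hat P_N,P_\Psi)$. Taking the supremum over $D$ gives
\[
-\log 4\le\sup_{D\in\mathcal{D}_L}\mathcal{L}_D\le-\log 4+(L/\eta)\,W_1(\hat P_N,P_\Psi).
\]
On a compact metric space, $W_1$ metrizes weak convergence. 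Varadarajan's theorem gives $\hat P_N\Rightarrow P_\Psi$ almost surely, assuming History Matching draws i.i.d.\ samples from the human database and the affine alignment is absorbed into $\Psi$. Hence $W_1(\hat P_N,P_\Psi)\to0$ almost surely, and the limit is $-\log 4$.

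\textbf{Strict inequality for the raw agent.} I will perturb the constant detector as $D_t=\tfrac12+t\,h$, where $h(x)=\min\{d(x,\mathbf 0),1\}$. This $h$ is $1$-Lipschitz, satisfies $h(\mathbf 0)=0$, and takes values in $[0,1]$. For small $t>0$, $D_t\in\mathcal{D}_L$. Differentiating at $t=0$ gives
\[
\tfrac{d}{dt}\mathcal{L}_D(D_t;P_\Psi,\delta_{\mathbf 0})\big|_{t=0}=2\,\mathbb{E}_{P_\Psi}[h]-2h(\mathbf 0)=2\,\mathbb{E}_{P_\Psi}[h].
\]
This is strictly positive because $P_\Psi(\{\mathbf 0\})=0$. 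That fact follows from the mutual-singularity assumption: $P_\Psi$ is the pushforward of an absolutely continuous law and places no atom at the agent's point, which I will add as a mild hypothesis if needed. Hence some $D_t$ achieves value $>-\log 4$, giving the strict inequality.

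\textbf{Main obstacle.} I expect the main difficulty to be justifying the range restriction $[\eta,1-\eta]$, without which $\log(1-D)$ is not Lipschitz. I would handle it as a standing convention on $\mathcal{D}_L$ rather than attempt to remove it. A secondary point is stating the sampling model precisely enough to invoke almost-sure weak convergence, which is what makes $W_1(\hat P_N,P_\Psi)\to0$.
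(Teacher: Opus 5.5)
Your proposal is correct and follows the paper's proof. The paper bounds $|V(\hat P_N)-V(P_\Psi)|$ by a Kantorovich--Rubinstein estimate on the $\log(1-D)$ term; your sandwich just unpacks the fact $V(P_\Psi)=-\log 4$ that it uses. It then beats $-\log 4$ against $\delta_{\mathbf 0}$ by perturbing the constant detector $\tfrac12$ in the direction $\min\{\|x\|,1\}$, as you do. Your range restriction, i.i.d.\ sampling model and exact derivative make explicit what the paper leaves implicit.

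Two small remarks. First, strict positivity of $\mathbb{E}_{P_\Psi}[h]$ needs only $P_\Psi\neq\delta_{\mathbf 0}$, which is what the paper uses. Your claim $P_\Psi(\{\mathbf 0\})=0$ does not follow from mutual singularity on $\mathcal{X}$, since the pushforward under $\Psi$ can still have an atom at $\mathbf 0$. Second, you can drop the $[\eta,1-\eta]$ restriction you call the main obstacle. For any $L$-Lipschitz $D$ with values in $[0,1]$, one has $\log\bigl[D(x)\bigl(1-D(y)\bigr)\bigr]\le\log\bigl(\tfrac14+L\,d(x,y)\bigr)\le-\log 4+4L\,d(x,y)$. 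Integrating this against an optimal coupling of $P_\Psi$ and $\hat P_N$ gives $\mathcal{L}_D(D;P_\Psi,\hat P_N)\le-\log 4+4L\,W_1(\hat P_N,P_\Psi)$.
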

\begin{proof}
Let $V(Q) = \sup_{D} \mathcal{L}_D(D; P_\Psi, Q)$. For the History Matching agent, using the inequality $|\sup f - \sup g| \le \sup |f-g|$:
\begin{align*}
|V(\hat{P}_N) - (-\log 4)| &= |V(\hat{P}_N) - V(P_\Psi)| \\
&\le \sup_{D \in \mathcal{D}_L} \left| \mathbb{E}_{x \sim \hat{P}_N}[\log(1 - D(x))] - \mathbb{E}_{x \sim P_\Psi}[\log(1 - D(x))] \right|.
\end{align*}
Since $h(x) = \log(1-D(x))$ is Lipschitz bounded by constant $K$, Kantorovich-Rubinstein duality implies the bound $K \cdot W_1(\hat{P}_N, P_\Psi)$, which converges to 0 almost surely.

As for the Raw Agent $G_{\Phi, \Psi} = \delta_{\mathbf{0}}$, consider the perturbation $D_\epsilon(x) = \sigma(2\epsilon \cdot \min(\|x\|, 1))$ around the trivial detector $D_0=0.5$. Using the Taylor expansion $\log(0.5+u) \approx -\log 2 + 2u$:
\begin{align*}
\mathcal{L}(D_\epsilon) &\approx \mathbb{E}_{P_\Psi}[-\log 2 + 2(D_\epsilon(x)-0.5)] + [-\log 2 - 2(D_\epsilon(\mathbf{0})-0.5)] \\
&= -\log 4 + 2\mathbb{E}_{x \sim P_\Psi}[D_\epsilon(x) - 0.5] \\
&\approx -\log 4 + \epsilon \mathbb{E}_{x \sim P_\Psi}[\min(\|x\|, 1)].
\end{align*}
Since $P_\Psi \neq \delta_{\mathbf{0}}$, the expectation is strictly positive. Thus, $\sup \mathcal{L} > -\log 4$.
\end{proof}

\section{Related Work}
\label{app:related_works}
\subsection{LMM-based GUI Agents}
The evolution of mobile automation has transitioned from rigid, rule-based scripts to autonomous agents empowered by Large Multimodal Models (LMMs). Early automation frameworks, such as Selenium and Appium, relied heavily on static XML view hierarchies and predefined coordinate scripts, making them brittle to UI updates and lacking semantic understanding. 

The emergence of LMMs~\cite{gpt4, gemini, llava} has catalyzed a paradigm shift. Recent works like AppAgent~\cite{appagent}, Mobile-Agent~\cite{mobile_agent,ye2508mobile}, CogAgent~\cite{cogagent}, and others~\cite{ma2024coco,li2025mobileuse,ma2024caution} utilize the strong visual perception and reasoning capabilities of models to interact with mobile interfaces in a manner akin to human users. These agents can interpret screenshots, reason about task goals, and execute actions through a general-purpose action space (e.g., tap, swipe). Subsequent research has further expanded these capabilities to web navigation~\cite{mind2web, yao2022webshop}, adaptive OS-level control~\cite{wu2025verios,cheng2025kairos}, and complex agentic information retrieval and deep search tasks~\cite{zhang2025agenticinformationretrieval,xi2025surveyllmbaseddeepsearch}. 

However, as highlighted by recent comprehensive surveys on agent evaluation~\cite{zhu2025evolutionaryperspectivesevaluationllmbased}, the primary objective of these existing works and benchmarks is heavily skewed towards maximizing \textbf{Task Success Rate (Utility)} and efficiency. This goal has driven the adoption of advanced optimization techniques, such as reinforcement learning and policy optimization~\cite{gu2025mobile,lu2025arpo,xumobilerl}, to refine agent decision-making. Consequently, the motion control modules of these agents are often implemented using deterministic algorithms. While effective for task completion, these efficient but unnatural kinematic patterns create a distinct behavioral gap compared to human users, leaving them vulnerable to detection.

\subsection{Adversarial Dynamics in Digital Ecosystems}
The widespread deployment of autonomous agents has precipitated a structural conflict within the digital economy. As noted by recent studies~\cite{lin2025superplatformsattackaiagents, allouah2025aiagentbuyingevaluation}, dominant digital platforms rely heavily on the attention economy, whereas agents are optimized for efficiency, often bypassing ads and promotional content.

This misalignment of incentives has triggered an adversarial dynamic. Existing research in this domain predominantly focuses on the axis of \textbf{Robustness} versus \textbf{Perturbation}~\cite{xi2025rise, wu2025dissectingadversarialrobustnessmultimodal, zhang2025agentsecuritybenchasb, xu2025advagentcontrollableblackboxredteaming}. For instance, recent works~\cite{gu2024agent,cui2023robustnesslargemultimodalmodels,lin2025superplatformsattackaiagents,dong2023robust} have demonstrated how platforms can launch adversarial attacks to disrupt an agent's visual grounding. These threats range from environmental injections~\cite{liao2025eiaenvironmentalinjectionattack, chen2025evaluatingrobustnessmultimodalagents, chen2025obviousinvisiblethreatllmpowered, zhang2025attackingvisionlanguagecomputeragents} and visual adversaries~\cite{fang2024clipguidedgenerativenetworkstransferable, zhang2025qavaqueryagnosticvisualattack, de2024exploring} to more insidious backdoor and jailbreak triggers~\cite{wang2024badagentinsertingactivatingbackdoor, yang2024watchagentsinvestigatingbackdoor, weng-etal-2025-foot}. In response, strategies have been proposed to fine-tune agents to resist such visual and structural perturbations.

However, this perspective addresses only the availability of agents, overlooking the prerequisite of invisibility. Before deploying complex adversarial perturbations, platforms are incentivized to first deploy passive detection mechanisms, utilizing techniques such as behavioral biometrics and fingerprinting, to filter out non-human actors to avoid degrading the experience for real users. Our work shifts the focus from robustness against functional attacks to \textbf{survivability against behavioral detection}, framing the interaction as a ``Turing Test on Screen.''

\subsection{Bot Detection and Behavioral Biometrics}
The detection of automated agents has long been a critical component of digital security. Traditional bot detection literature~\cite{mahfouz2017survey,vastel2018fp,laperdrix2020browser} focuses primarily on identifying rigid scripts associated with web crawlers. These methods often rely on analyzing deterministic patterns, such as fixed inter-arrival times, repetition of identical coordinate sequences, or inconsistencies in browser fingerprints.

In the mobile domain, detection often leverages \textbf{Behavioral Biometrics}, which treats touch dynamics as a unique signature for user identity. Research in this field~\cite{zaidi2021touch, frank2012touchalytics, alrawili2024comprehensivesurveybiometricuser, Feng2012ContinuousMA, Kroeze2016UserAB, Shen2024IncreAuthIB, alrawili2024comprehensivesurveybiometricuser} utilizes features such as touch pressure, contact area, finger velocity, and trajectory curvature to distinguish between different human users. While some works have extended these principles to mouse dynamics~\cite{khan2024mouse} or game avatar trajectories~\cite{pao2010game}, the primary focus remains on user verification. Specifically, studies have explored the resilience of these systems against simple replay attacks~\cite{ForgResTouchAuth, zaidi2021touch} and even robotic attacks on touchscreens~\cite{serwadda2016toward}, with recent advancements proposing GAN-based frameworks to enhance robustness~\cite{9893211}.

Despite this rich history, a critical gap exists in the era of LMM-based agents. Current detection paradigms operate under the assumption that bots are either dumb scripts with zero variance or replay attacks with perfect repetition. LMM-based agents, however, represent a new class of adversary: they possess stochastic decision-making capabilities but often exhibit mechanical execution. There is currently a lack of systematic research addressing the detectability of these advanced agents, which sit in the uncanny valley between rigid bots and natural humans.
\section{Further Discussion}
\label{app:discussion}

\subsection{The Robustness of Detection Baselines.}

A potential concern regarding our benchmark is the choice of feature-based classifiers (SVM and XGBoost) over deep sequence models like LSTMs~\cite{graves2012long} or Transformers~\cite{vaswani2017attention}. We argue that this setup is both scientifically rigorous and practically representative. Empirically, we observe a saturation of Detection effect: raw LMM agents exhibit such high levels of mechanistic regularity (e.g., near-zero variance in velocity and perfect linearity) that even shallow statistical models achieve near-perfect accuracy ($>99\%$), rendering more complex neural architectures redundant for current identification tasks. Furthermore, the focus on interpretable features allows us to pinpoint the specific behavioral patterns of agents, which is more conducive to the iterative refinement of humanization strategies. From a theoretical standpoint (see Theorem~\ref{thm:history_matching}), our History Matching approach aims to minimize the J-S divergence between agent and human distributions. As this divergence approaches zero, the performance of any classifier, regardless of its architecture, is bounded by random guessing. Thus, AHB provides a foundational metric that addresses the root of detectability across varying levels of detector complexity.

\subsection{Delineating the Scope From Motion Dynamics to Physical Sensors}

A natural extension of the Turing Test on Screen would involve auxiliary sensor data, such as gyroscope and accelerometer readings, which real-world anti-abuse systems often utilize to verify physical device movement. In this study, we intentionally constrain our primary focus to the humanization of MotionEvents . We justify this prioritization based on two factors. First, touch dynamics constitute the first line of defense. As demonstrated in our empirical analysis, current agents fail so significantly at the interaction layer that platforms can effectively flag them without invoking high-power sensor monitoring. Second, high-fidelity sensor simulation presents a distinct system-level challenge, often requiring kernel-level signal injection or physical robotics, which falls outside the scope of algorithmic agent policy design. Future work will explore the cross-modal alignment between virtual touch interactions and physical sensor signals, aiming to construct a holistic humanization framework that synchronizes cinematic screen movements with believable inertial noise.

\subsection{The Imitability-Utility Pareto Frontier.}

A critical observation in our benchmark is the significant utility degradation associated with certain humanization strategies, most notably the fake action injection in complex tasks like Trip Planning in Table~\ref{tab:main table}. We argue that these results empirically expose a fundamental \textbf{Pareto Frontier} between behavioral imitability and task utility. The collapse of success rates (from 0.75 to 0.15) in the Online + Fake configuration reveals that naive noise injection, while effective at obfuscating statistical signatures, frequently violates the underlying logic of the GUI, leading to unintended side effects such as accidental navigation or session timeouts. In contrast, our History Matching strategy maintains a much more favorable balance, achieving high imitability with significantly less utility loss. By documenting these failures, AHB provides a crucial cautionary tale for the community: humanization must be context-aware rather than purely stochastic. These findings justify the need for more advanced Guard Agents (as discussed in Section~\ref{sec:the hardest features}) that can generate human-like noise without logical interference, setting a new research agenda for the next generation of GUI agents~\cite{liu2026positionrealbarrierllm}. 

\subsection{Broader Impact and Ethical Considerations.}

The introduction of the AHB benchmark raises important ethical questions regarding the potential for bypassing anti-bot measures. However, we contend that the release of this research serves the long-term interests of platform security through a Red Teaming philosophy. Historically, malicious actors (e.g., click farms) have operated with proprietary, non-transparent evasion techniques, leaving defenders in a reactive stance. By formalizing the behavioral gaps of LMM agents and providing standardized detection baselines, our work empowers platform defenders to identify and mitigate sophisticated bot behaviors more effectively. Furthermore, our primary motivation is to \textbf{safeguard User Agency}. As illustrated by the case in Appendix~\ref{app:doubao_case}, current all-or-nothing defense mechanisms often inadvertently penalize legitimate users employing AI assistants for accessibility or productivity. Our research advocates for a shift toward more nuanced, behavioral-aware authentication that can distinguish between constructive automation and malicious exploitation. We will release our dataset and evaluation harness under a restrictive research license to ensure they are used primarily for advancing the robustness of defense systems and the development of ethical AI assistants.

\newpage
\section{System Prompt for Mobile GUI Agent}
\label{app:system_prompt}

The following is the specific system prompt template used for the \textbf{MOBILE\_USE\_DOUBAO} configuration in UI-TARS. This prompt defines the action space, output constraints, and task-specific instructions for the agent.

\subsection{UI-TARS}

\begin{lstlisting}
MOBILE_USE_DOUBAO = """You are a GUI agent. You are given a task and your action history, with screenshots. You need to perform the next action to complete the task. 
## Output Format
```
Thought: ...
Action: ...
```
## Action Space

click(point='<point>x1 y1</point>')
long_press(point='<point>x1 y1</point>')
type(content='') #If you want to submit your input, use "\\n" at the end of `content`.
scroll(point='<point>x1 y1</point>', direction='down or up or right or left')
open_app(app_name=\'\')
drag(start_point='<point>x1 y1</point>', end_point='<point>x2 y2</point>')
press_home()
press_back()
finished(content='xxx') # Use escape characters \\', \\", and \\n in content part to ensure we can parse the content in normal python string format.


## Note
- Use {language} in `Thought` part.
- Write a small plan and finally summarize your next action (with its target element) in one sentence in `Thought` part.

## User Instruction
{instruction}
"""
Search for flights from Beijing to Shenzhen on the 16th of a specific month, filter by departure time between 12:00 and 18:00, specify economy class, select one flight, and view detailed refund and change information.
\end{lstlisting}

\subsection{Mobile-Agent-E }
\label{app:mobile_agent_e}

This section details the hierarchical prompt templates used in Mobile-Agent-E. The templates are dynamically populated with environmental metadata (e.g., coordinates, keyboard status) and historical context.

\subsection{Action Perception Prompt}
The following template is used to generate the next operational step based on the current screenshot and history.

\begin{lstlisting}[basicstyle=\footnotesize\ttfamily, breaklines=true, frame=single, backgroundcolor=\color{gray!5}]
### Background ###
This image is a phone screenshot. Its width is {width} pixels and its height is {height} pixels. The user's instruction is: {instruction}.

### Screenshot information ###
In order to help you better perceive the content... [Coordinates]; [Content]
{clickable_infos}
Please note that this information is not necessarily accurate.

### Keyboard status ###
{The keyboard has (not) been activated...}

### History operations ###
Step-1: [Operation: {thought}; Action: {action}]

### Progress ###
Completed contents: {completed_content}

### Response requirements ###
You must choose one of the six actions below:
1. Open app (app name)
2. Tap (x, y)
3. Swipe (x1, y1), (x2, y2)
4. Type (text) / Unable to Type
5. Home
6. Stop

### Output format ###
### Thought ###
### Action ###
### Operation ###
\end{lstlisting}

\subsection{Action Reflection Prompt}
Used after an operation to verify if the result meets the expected thought.

\begin{lstlisting}[basicstyle=\footnotesize\ttfamily, breaklines=true, frame=single, backgroundcolor=\color{gray!5}]
### Before the current operation ###
Screenshot info & Keyboard status...

### After the current operation ###
Screenshot info & Keyboard status...

### Current operation ###
Instruction: {instruction}
Operation thought: {summary}
Operation action: {action}

### Response requirements ###
A: The result meets my expectation.
B: Results in a wrong page (need to return).
C: Produces no changes.

### Output format ###
### Thought ###
### Answer ###
\end{lstlisting}

\subsection{Memory \& Process Update Prompts}
Templates for maintaining long-term knowledge and tracking task completion progress.

\begin{lstlisting}[basicstyle=\footnotesize\ttfamily, breaklines=true, frame=single, backgroundcolor=\color{gray!5}]
% Memory Prompt Segment
### Response requirements ###
Is there any content closely related to ### Important content ### on the current page? 
### Output format ###
### Important content ### {Content or None}

% Process Prompt Segment
### Progress thinking ###
Completed contents: {completed_content}
### Response requirements ###
Update the "Completed contents". Don't output the purpose, just summarize what has been actually completed.
### Output format ###
### Completed contents ###
\end{lstlisting}

\subsection{Agent-CPM}
\label{app:agent_cpm_prompt}

The system prompt for Agent-CPM utilizes a schema-driven approach to ensure structured output and precise coordinate-based grounding. The integrated JSON schema constrains the model to output valid operational logic for Android GUI environments.

\begin{lstlisting}[
    basicstyle=\small\ttfamily,
    breaklines=true,
    frame=single,
    backgroundcolor=\color{gray!5},
    caption={Synthesized System Prompt for Agent-CPM (CPM-GUI)},
    label={lst:agent_cpm_full}
]
# Role
You are an agent familiar with Android system touchscreen GUI operations. You will analyze the GUI elements and layout of the current interface based on user questions and generate corresponding operations.

# Task
Based on the user's question and the input screenshot of the current screen, output the next operation.

# Rule
- Output in compact JSON format.
- Output operations must follow the Schema constraints.

# Schema
{
  "type": "object",
  "description": "Execute action and determine current task status",
  "additionalProperties": false,
  "properties": {
    "thought": {
      "type": "string",
      "description": "The agent's thinking process"
    },
    "POINT": {
      "$ref": "#/$defs/Location",
      "description": "Click on a specified location on the screen"
    },
    "to": {
      "description": "Movement, composite gesture parameters",
      "oneOf": [
        {
          "enum": ["up", "down", "left", "right"],
          "description": "From the current point (POINT), perform a swipe gesture in directions: up, down, left, right"
        },
        {
          "$ref": "#/$defs/Location",
          "description": "Move to a certain location"
        }
      ]
    },
    "duration": {
      "type": "integer",
      "description": "Execution time or wait time for the action, in milliseconds",
      "minimum": 0,
      "default": 200
    },
    "PRESS": {
      "type": "string",
      "description": "Trigger special keys: HOME (go to home page), BACK (return button), ENTER (return key)",
      "enum": ["HOME", "BACK", "ENTER"]
    },
    "TYPE": {
      "type": "string",
      "description": "Input text"
    },
    "STATUS": {
      "type": "string",
      "description": "Current status of the task. Special cases: satisfied (no action needed), impossible (task cannot be completed), interrupt (task interrupted), need_feedback (user feedback required)",
      "enum": ["continue", "finish", "satisfied", "impossible", "interrupt", "need_feedback"],
      "default": "continue"
    }
  },
  "$defs": {
    "Location": {
      "type": "array",
      "description": "Coordinates relative to the top-left corner of the screen, scaled to 0-1000 based on width/height ratio. [x, y]",
      "items": { "type": "integer", "minimum": 0, "maximum": 1000 },
      "minItems": 2, "maxItems": 2
    }
  }
}
\end{lstlisting}

\subsection{Open-AutoGLM System Prompt}
\label{app:open_autoglm_prompt}

The system prompt for Open-AutoGLM is designed to guide the model through a loop of screen perception and action execution. The prompt resides in \texttt{phone\_agent/config/prompts\_en.py} (or \texttt{prompts\_zh.py} for Chinese).

\begin{lstlisting}[
    basicstyle=\small\ttfamily,
    breaklines=true,
    frame=single,
    backgroundcolor=\color{gray!5},
    caption={System Prompt for Open-AutoGLM},
    label={lst:open_autoglm}
]
# Role
You are a professional mobile phone operation assistant. You need to analyze the current screenshot and task history to help users complete their requests on an Android device.

# Task
Your goal is: {instruction}
Current Screen Resolution: {width} x {height}

# Guidelines
1. Observe the current screenshot carefully.
2. Consider the previous actions and the progress made so far.
3. Determine the next logical step. If the task is completed, use the "stop" action.
4. All coordinates must be normalized to a range of 0 to 1000.

# Action Space
- click(x, y): Tap the screen at normalized coordinates (x, y).
- swipe(x1, y1, x2, y2): Swipe from (x1, y1) to (x2, y2).
- type(text): Type the specified text into the focused input field.
- key(name): Press system keys like 'HOME', 'BACK', or 'MENU'.
- wait(): Wait for the screen to update or an app to load.
- stop(summary): Finalize the task and provide a summary of what was achieved.

# Output Format
Thought: Your reasoning for the next step.
Action: The function call representing your action.
\end{lstlisting}
\newpage

\section{Experiment Results}
\label{app:experiment results}

\begin{figure}[ht]
  \vspace{-10pt}
  \begin{center}
    \centerline{\includegraphics[width=0.9\textwidth]{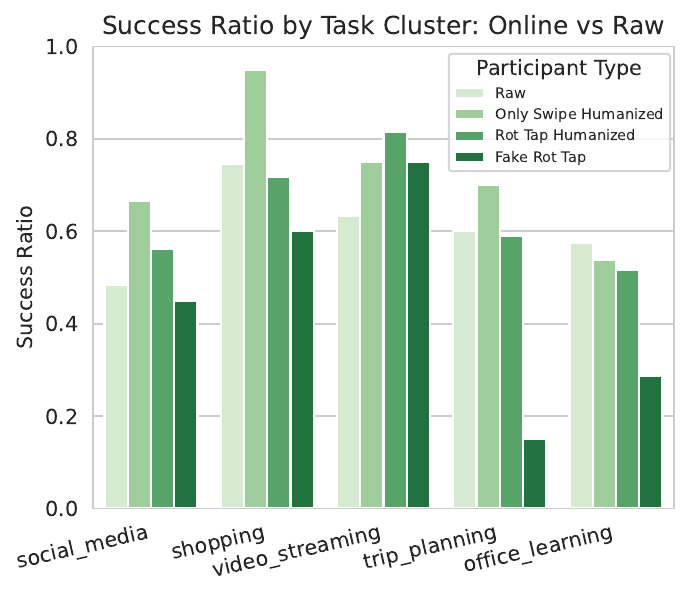}}
    
    \vspace{-10pt}
    \caption{Impact of Online Humanization on Task Utility. This chart compares the success rates of raw agents (light green) versus those employing various humanization strategies (darker and darker green). }
    \label{fig:success rate comparison}
  \end{center}
    
  \vspace{-30pt}
\end{figure}

\begin{table*}[ht]
\centering
\caption{\textbf{Merged Results: Cluster 0 vs Cluster 1 vs Cluster 2}}
\label{tab:merged_0_1_2_no_unh}

\tiny
\setlength{\tabcolsep}{2pt} 

\resizebox{\linewidth}{!}{%
\begin{tabular}{l cccc cccc cccc}
\toprule
\multirow{2}{*}{\textbf{Metric}} & 
\multicolumn{4}{c}{\textbf{Cluster 0: Social Media}} & 
\multicolumn{4}{c}{\textbf{Cluster 1: Shopping}} &
\multicolumn{4}{c}{\textbf{Cluster 2: VideoStreaming}} \\
\cmidrule(lr){2-5} \cmidrule(lr){6-9} \cmidrule(lr){10-13}

& \textbf{RAW} & \textbf{On.RM} & \textbf{Off.RM} & \textbf{BS} 
& \textbf{RAW} & \textbf{On.RM} & \textbf{Off.RM} & \textbf{BS}
& \textbf{RAW} & \textbf{On.RM} & \textbf{Off.RM} & \textbf{BS} \\ 
\midrule

maxDev & 0.9969 & 0.5515 & 0.6186 & 0.7556 & 0.9899 & 0.6231 & 0.7710 & 0.5379 & 0.9929 & 0.6636 & 0.7146 & 0.5026 \\
meanResultantLength & 0.9878 & 0.6818 & 0.6286 & 0.6979 & 0.9982 & 0.8556 & 0.8183 & 0.9336 & 1.0000 & 0.8700 & 0.8476 & 0.9003 \\
ratio\_end\_to\_len & 0.9878 & 0.6451 & 0.5798 & 0.5826 & 0.9980 & 0.8556 & 0.8537 & 0.9002 & 1.0000 & 0.8447 & 0.8338 & 0.8476 \\
duration & 0.8583 & 0.6907 & 0.5470 & 0.8507 & 0.9209 & 0.7554 & 0.8351 & 0.9230 & 0.9175 & 0.7517 & 0.7560 & 0.9147 \\
a20 & 0.8355 & 0.8286 & 0.7190 & 0.7686 & 0.9632 & 0.9249 & 0.8780 & 0.9309 & 0.9679 & 0.9494 & 0.9306 & 0.9390 \\
acc\_first5pct & 0.8244 & 0.5897 & 0.5532 & 0.8093 & 0.5155 & 0.7407 & 0.8301 & 0.5430 & 0.5652 & 0.7551 & 0.7782 & 0.5242 \\
a80 & 0.8154 & 0.6575 & 0.6205 & 0.6560 & 0.5708 & 0.8516 & 0.8455 & 0.7229 & 0.5754 & 0.8577 & 0.8641 & 0.6784 \\
dev80 & 0.7645 & 0.5310 & 0.6445 & 0.5476 & 0.6553 & 0.6863 & 0.7140 & 0.7685 & 0.6173 & 0.6668 & 0.7105 & 0.7239 \\
dev20 & 0.7634 & 0.5038 & 0.5560 & 0.5111 & 0.8765 & 0.7317 & 0.7315 & 0.8176 & 0.8656 & 0.7076 & 0.7091 & 0.7907 \\
dev50 & 0.7055 & 0.5510 & 0.6416 & 0.5251 & 0.7362 & 0.6777 & 0.7027 & 0.7994 & 0.7063 & 0.6879 & 0.7049 & 0.7696 \\
v80 & 0.6996 & 0.5188 & 0.6301 & 0.7026 & 0.5980 & 0.7039 & 0.7898 & 0.6466 & 0.6229 & 0.7007 & 0.7404 & 0.6544 \\
avgDirection & 0.6763 & 0.5515 & 0.5900 & 0.5872 & 0.7140 & 0.7380 & 0.7702 & 0.7660 & 0.7355 & 0.7528 & 0.7560 & 0.7674 \\
direction & 0.6734 & 0.5470 & 0.6734 & 0.5758 & 0.7140 & 0.7157 & 0.7140 & 0.7635 & 0.7278 & 0.7336 & 0.7278 & 0.7595 \\
startY & 0.6581 & 0.7472 & 0.6581 & 0.6581 & 0.7965 & 0.8228 & 0.7965 & 0.7965 & 0.8656 & 0.8907 & 0.8656 & 0.8656 \\
speed & 0.6531 & 0.5000 & 0.6100 & 0.6329 & 0.8183 & 0.7317 & 0.7921 & 0.8210 & 0.7429 & 0.6922 & 0.7226 & 0.7380 \\
startX & 0.6531 & 0.5310 & 0.6531 & 0.5730 & 0.7844 & 0.7872 & 0.7844 & 0.8223 & 0.7976 & 0.8248 & 0.7976 & 0.8380 \\
a50 & 0.6447 & 0.7789 & 0.6953 & 0.6501 & 0.9117 & 0.9224 & 0.8513 & 0.9002 & 0.9164 & 0.9321 & 0.9275 & 0.9108 \\
displacement & 0.6416 & 0.5154 & 0.6416 & 0.6387 & 0.8262 & 0.5907 & 0.8262 & 0.8249 & 0.6154 & 0.5648 & 0.6154 & 0.6301 \\
v50 & 0.6329 & 0.5407 & 0.6243 & 0.6387 & 0.8156 & 0.7468 & 0.8045 & 0.8169 & 0.7278 & 0.7410 & 0.7465 & 0.7278 \\
endX & 0.6243 & 0.5262 & 0.6243 & 0.5251 & 0.7950 & 0.7989 & 0.7950 & 0.8269 & 0.8918 & 0.8700 & 0.8918 & 0.8808 \\
endY & 0.6178 & 0.7279 & 0.6178 & 0.6232 & 0.7567 & 0.8567 & 0.7567 & 0.7567 & 0.8515 & 0.9156 & 0.8515 & 0.8515 \\
v20 & 0.6014 & 0.5479 & 0.5843 & 0.6014 & 0.8810 & 0.7678 & 0.7883 & 0.8775 & 0.8191 & 0.7696 & 0.7824 & 0.8110 \\
length & 0.5617 & 0.5271 & 0.5560 & 0.5588 & 0.8135 & 0.6550 & 0.7935 & 0.7898 & 0.5373 & 0.5221 & 0.5732 & 0.5602 \\
v\_last3\_median & 0.5419 & 0.6637 & 0.6313 & 0.5560 & 0.8713 & 0.8699 & 0.8009 & 0.8681 & 0.8287 & 0.8776 & 0.8627 & 0.8262 \\
svm\_accuracy & 0.9817 & 0.8750 & 0.9633 & 0.9633 & 0.9887 & 0.9593 & 0.9323 & 0.9774 & 0.9850 & 0.9502 & 0.9300 & 0.9650 \\
xgb\_accuracy & 1.0000 & 0.9773 & 0.9450 & 0.9817 & 1.0000 & 0.9889 & 0.9925 & 0.9925 & 1.0000 & 0.9950 & 0.9850 & 0.9850 \\
\bottomrule
\end{tabular}
}

\vspace{0.2cm}
\footnotesize
\textbf{Legend:} \textit{On.RM}: Online Rotation \& Match; \textit{Off.RM}: Offline Rotation \& Match; \textit{BS}: B-Spline.
\end{table*}

\begin{table*}[h]
\centering
\caption{\textbf{Merged Results: Cluster 3 vs Cluster 4}}
\label{tab:merged_3_4_no_unh}

\resizebox{\linewidth}{!}{%
\begin{tabular}{l cccc cccc}
\toprule
\multirow{2}{*}{\textbf{Metric}} & 
\multicolumn{4}{c}{\textbf{Cluster 3: Trip Planning}} & 
\multicolumn{4}{c}{\textbf{Cluster 4: Office \& Learning}} \\
\cmidrule(lr){2-5} \cmidrule(lr){6-9}

& \textbf{RAW} & \textbf{On.RM} & \textbf{Off.RM} & \textbf{BS} 
& \textbf{RAW} & \textbf{On.RM} & \textbf{Off.RM} & \textbf{BS} \\ 
\midrule

maxDev & 0.9895 & 0.7188 & 0.6508 & 0.5629 & 1.0000 & 0.6347 & 0.5841 & 0.8178 \\
meanResultantLength & 0.9940 & 0.7806 & 0.7490 & 0.8479 & 1.0000 & 0.5181 & 0.5625 & 0.7720 \\
ratio\_end\_to\_len & 0.9984 & 0.7573 & 0.7266 & 0.7830 & 1.0000 & 0.5294 & 0.5702 & 0.7366 \\
duration & 0.8601 & 0.7434 & 0.6696 & 0.8648 & 0.6159 & 0.5944 & 0.5052 & 0.6372 \\
a20 & 0.9376 & 0.8153 & 0.8251 & 0.8855 & 0.8521 & 0.7204 & 0.7720 & 0.7543 \\
acc\_first5pct & 0.6396 & 0.7445 & 0.7132 & 0.6082 & 0.7580 & 0.5662 & 0.6398 & 0.7283 \\
a80 & 0.5940 & 0.7913 & 0.7801 & 0.5733 & 0.7824 & 0.5616 & 0.5935 & 0.6772 \\
dev80 & 0.5086 & 0.5949 & 0.5748 & 0.6584 & 0.7526 & 0.5334 & 0.5314 & 0.5521 \\
dev20 & 0.8552 & 0.6566 & 0.6365 & 0.7437 & 0.8247 & 0.5922 & 0.5419 & 0.6115 \\
dev50 & 0.5909 & 0.5918 & 0.5895 & 0.7030 & 0.7921 & 0.5685 & 0.5236 & 0.5935 \\
v80 & 0.5017 & 0.7071 & 0.6818 & 0.5151 & 0.5815 & 0.5616 & 0.5806 & 0.6186 \\
avgDirection & 0.5234 & 0.6332 & 0.6204 & 0.5880 & 0.6106 & 0.6050 & 0.5288 & 0.5921 \\
direction & 0.5151 & 0.5395 & 0.5151 & 0.5822 & 0.6372 & 0.6762 & 0.6372 & 0.6133 \\
startY & 0.5807 & 0.6749 & 0.5807 & 0.5851 & 0.5157 & 0.6262 & 0.5157 & 0.5157 \\
speed & 0.6204 & 0.7106 & 0.6806 & 0.6231 & 0.6475 & 0.5246 & 0.5651 & 0.6424 \\
startX & 0.6721 & 0.6388 & 0.6721 & 0.6842 & 0.5261 & 0.5226 & 0.5261 & 0.5261 \\
a50 & 0.8471 & 0.8054 & 0.8421 & 0.8294 & 0.6986 & 0.6980 & 0.7366 & 0.6858 \\
displacement & 0.7211 & 0.7894 & 0.7211 & 0.7121 & 0.6532 & 0.7782 & 0.6532 & 0.5183 \\
v50 & 0.6176 & 0.7211 & 0.6854 & 0.6404 & 0.5909 & 0.5090 & 0.5702 & 0.5599 \\
endX & 0.6830 & 0.6471 & 0.6830 & 0.6949 & 0.5183 & 0.5134 & 0.5183 & 0.5183 \\
endY & 0.5880 & 0.6888 & 0.5880 & 0.5938 & 0.5104 & 0.6050 & 0.5104 & 0.5052 \\
v20 & 0.7521 & 0.7268 & 0.6770 & 0.7395 & 0.7265 & 0.5067 & 0.5728 & 0.7011 \\
length & 0.6571 & 0.7477 & 0.6647 & 0.6659 & 0.6799 & 0.6136 & 0.5683 & 0.5498 \\
v\_last3\_median & 0.7653 & 0.7257 & 0.7458 & 0.7320 & 0.6654 & 0.6132 & 0.6552 & 0.6577 \\
svm\_accuracy & 0.9817 & 0.9479 & 0.8995 & 0.9726 & 0.9826 & 0.9265 & 0.9391 & 0.9739 \\
xgb\_accuracy & 0.9954 & 0.9905 & 0.9863 & 0.9909 & 1.0000 & 0.9926 & 0.9739 & 0.9913 \\
\bottomrule
\end{tabular}
}

\vspace{0.2cm}
\footnotesize
\textbf{Legend:} \textit{On.RM}: Online Rotation \& Match; \textit{Off.RM}: Offline Rotation \& Match; \textit{BS}: B-Spline.
\end{table*}

\begin{figure}[ht]
  \begin{center}
    \centerline{\includegraphics[width=0.9\textwidth]{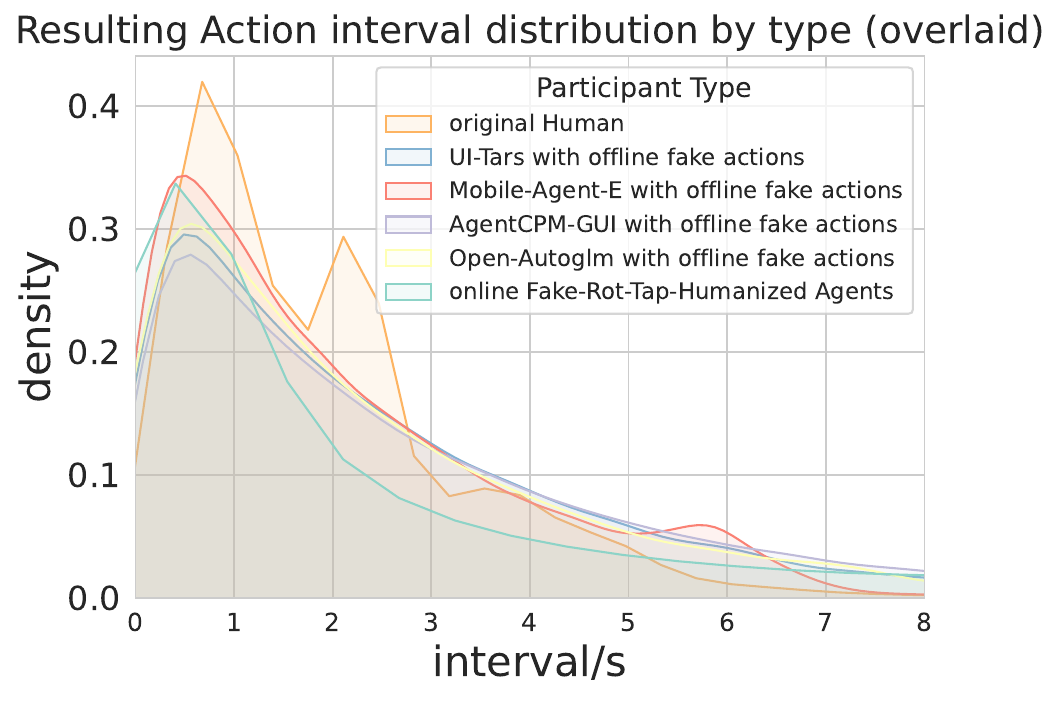}}
    \vspace{-10pt}
    \caption{Validation of interval mimicry using fake actions. The figure compares the normalized action interval distributions of the original human dataset (red) against four GUI agents (UI-Tars, Mobile-Agent-E, AgentCPM-GUI, Open-AutoGLM) augmented with offline fake actions and combined agents augmented with online fake actions. The significant overlap indicates that the proposed method successfully replicates the temporal distribution patterns of human behavior.}
    \label{fig:fake action interval}
  \end{center}
  \vspace{-10pt}
\end{figure}

\begin{figure}[ht]
  
  \vspace{-10pt}
  \begin{center}
    \centerline{\includegraphics[width=0.9\textwidth]{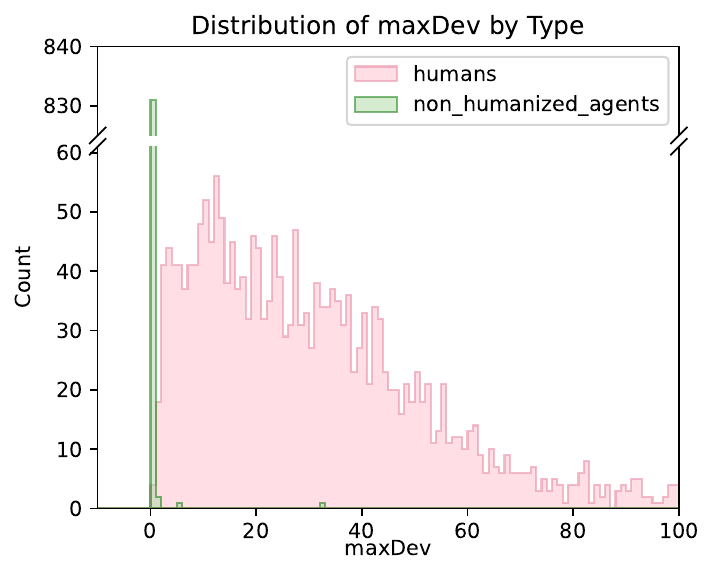}}
    \vspace{-10pt}
    \caption{Distribution analysis of trajectory deviation. We compare the distribution of the $maxDev$ feature for humans versus non-humanized agents. While human data follows a wide distribution reflecting natural motor variability, agent data is concentrated in a singular impulse near zero, confirming the linearity of algorithmically generated paths without humanization.}
    \label{fig:Max Deviation}
  \end{center}
  
  \vspace{-10pt}
\end{figure}

\end{document}